\newtheorem{theorem}{Theorem}
\newenvironment{proof}
{\noindent \textsc{Proof.}} {\hfill \begin{small}$\square$ \end{small}
\vspace{0.2cm}}
\newenvironment{proofsketch}
{\noindent \textsc{Sketch of Proof.}} {\hfill \begin{small}$\square$
\end{small} \vspace{0.2cm}}
\newcommand{\onecell}[3]{
\parbox[c]{2cm}{
    \vspace{0.1cm}
    \raggedleft #1\\
    \vspace{0.1cm}
    \raggedright #3}
}
\newtheorem{proposition}{Proposition}
\begin{document}



\title{Obtaining Reliable Feedback for Sanctioning Reputation Mechanisms}

\author{\name Radu Jurca \email radu.jurca@epfl.ch \\
        \name Boi Faltings \email boi.faltings@epfl.ch \\
       \addr Ecole Polytechnique F\'ed\'erale de Lausanne (EPFL) \\
        Artificial Intelligence Laboratory (LIA) \\
        CH-1015 Lausanne, Switzerland\\
        \url{http://liawww.epfl.ch}
}

\maketitle

\begin{abstract}
Reputation mechanisms offer an effective alternative to verification
authorities for building trust in electronic markets with moral hazard. Future
clients guide their business decisions by considering the feedback from past
transactions; if truthfully exposed, cheating behavior is sanctioned and thus
becomes irrational.

It therefore becomes important to ensure that rational clients have the right
incentives to report honestly. As an alternative to side-payment schemes that
explicitly reward truthful reports, we show that honesty can emerge as a
rational behavior when clients have a repeated presence in the market. To this
end we describe a mechanism that supports an equilibrium where truthful
feedback is obtained. Then we characterize the set of pareto-optimal equilibria
of the mechanism, and derive an upper bound on the percentage of false reports
that can be recorded by the mechanism. An important role in the existence of
this bound is played by the fact that rational clients can establish a
reputation for reporting honestly.
\end{abstract}

\section{Introduction}
The availability of ubiquitous communication through the Internet is driving
the migration of business transactions from direct contact between people to
electronically mediated interactions. People interact electronically either
through human-computer interfaces or through programs representing humans,
so-called agents. In either case, no physical interactions among entities
occur, and the systems are much more susceptible to fraud and deception.

Traditional methods to avoid cheating involve cryptographic schemes and
\emph{trusted third parties} (TTP's) that overlook every transaction. Such
systems are very costly, introduce potential bottlenecks, and may be difficult
to deploy due to the complexity and heterogeneity of the environment: e.g.,
agents in different geographical locations may be subject to different
legislation, or different interaction protocols.

Reputation mechanisms offer a novel and effective way of ensuring the necessary
level of trust which is essential to the functioning of any market. They are
based on the observation that agent strategies change when we consider that
interactions are repeated: the other party remembers past cheating, and changes
its terms of business accordingly in the future. Therefore, the expected gains
due to future transactions in which the agent has a higher reputation can
offset the loss incurred by not cheating in the present. This effect can be
amplified considerably when such reputation information is shared among a large
population, and thus multiplies the expected future gains made accessible by
honest behavior.

Existing reputation mechanisms enjoy huge success. Systems such as
eBay\footnote{www.ebay.com} or Amazon\footnote{www.amazon.com} implement
reputation mechanisms which are partly credited for the businesses' success.
Studies show that human users seriously take into account the reputation of the
seller when placing bids in online auctions \cite{Houser/Wooders:2006}, and
that despite the incentive to free ride, feedback is provided in more than half
of the transactions on eBay \cite{RZ:2002}.

One important challenge associated with designing reputation mechanisms is to
ensure that truthful feedback is obtained about the actual interactions, a
property called \textit{incentive-compatibility}. Rational users can regard the
private information they have observed as a valuable asset, not to be freely
shared. Worse even, agents can have external incentives to misreport and thus
manipulate the reputation information available to other agents
\cite{Harmon:2004}. Without proper measures, the reputation mechanism will
obtain unreliable information, biased by the strategic interests of the
reporters.

Honest reporting incentives should be addressed differently depending on the
predominant role of the reputation mechanisms. The \emph{signaling} role is
useful in environments where the service offered by different providers may
have different quality, but all clients interacting with the same provider are
treated equally (markets with \emph{adverse selection}). This is the case, for
example, in a market of web-services. Different providers possess different
hardware resources and employ different algorithms; this makes certain
web-services better than others. Nevertheless, all requests issued to the same
web-service are treated by the same program. Some clients might experience
worse service than others, but these differences are random, and not determined
by the provider. The feedback from previous clients statistically estimates the
quality delivered by a provider in the future, and hence signals to future
clients which provider should be selected.

The \emph{sanctioning} role, on the other hand, is present in settings where
service requests issued by clients must be individually addressed by the
provider. Think of a barber, who must skillfully shave every client that walks
in his shop. The problem here is that providers must exert care (and costly
effort) for satisfying every service request. Good quality can result only when
enough effort was exerted, but the provider is better off by exerting less
effort: e.g., clients will anyway pay for the shave, so the barber is better
off by doing a sloppy job as fast as possible in order to have time for more
customers. This \emph{moral hazard} situation can be eliminated by a reputation
mechanism that punishes providers for not exerting effort. Low effort results
in negative feedback that decreases the reputation, and hence the future
business opportunities of the provider. The future loss due to a bad reputation
offsets the momentary gain obtained by cheating, and makes cooperative behavior
profitable.

%

There are well known solutions for providing honest reporting incentives for
signaling reputation mechanisms. Since all clients interacting with a service
receive the same quality (in a statistical sense), a client's private
observation influences her belief regarding the experience of other clients. In
the web-services market mentioned before, the fact that one client had a bad
experience with a certain web-service makes her more likely to believe that
other clients will also encounter problems with that same web-service. This
correlation between the client's private belief and the feedback reported by
other clients can be used to design feedback payments that make honesty a Nash
equilibrium. When submitting feedback, clients get paid an amount that depends
both on the the value they reported and on the reports submitted by other
clients. As long as others report truthfully, the expected payment of every
client is maximized by the honest report -- thus the equilibrium.
\citeA{MRZ:2005} and \citeA{JF_EC:2006} show that incentive-compatible payments
can be designed to offset both reporting costs and lying incentives.

For \emph{sanctioning} reputation mechanisms the same payment schemes are not
guaranteed to be incentive-compatible. Different clients may experience
different service quality because the provider decided to exert different
effort levels. The private beliefs of the reporter may no longer be correlated
to the feedback of other clients, and therefore, the statistical properties
exploited by \citeA{MRZ:2005} are no longer present.

As an alternative, we propose different incentives to motivate honest reporting
based on the repeated presence of the client in the market. Game theoretic
results (i.e., the \emph{folk theorems}) show that repeated interactions
support new equilibria where present deviations are made unattractive by future
penalties. Even without a reputation mechanism, a client can guide her future
play depending on the experience of previous interactions. As a first result of
this paper, we describe a mechanism that indeed supports a cooperative
equilibrium where providers exert effort all the time. The reputation mechanism
correctly records when the client received low quality.

There are certainly some applications where clients repeatedly interact with
the same seller with a potential moral hazard problem. The barber shop
mentioned above is one example, as most people prefer going to the same barber
(or hairdresser). Another example is a market of delivery services. Every
package must be scheduled for timely delivery, and this involves a cost for the
provider. Some of this cost may be saved by occasionally dropping a package,
hence the moral hazard. Moreover, business clients typically rely on the same
carrier to dispatch their documents or merchandise. As their own business
depends on the quality and timeliness of the delivery, they do have the
incentive to form a lasting relationship and get good service. Yet another
example is that of a business person who repeatedly travels to an offshore
client. The business person has a direct interest to repeatedly obtain good
service from the hotel which is closest to the client's offices.

We assume that the quality observed by the clients is also influenced by
environmental factors outside the control of, however observable by, the
provider. Despite the barber's best effort, a sudden movement of the client can
always generate an accidental cut that will make the client unhappy. Likewise,
the delivery company may occasionally lose or damage some packages due to
transportation accidents. Nevertheless, the delivery company (like the barber)
eventually learns with certainty about any delays, damages or losses that
entitle clients to complain about unsatisfactory service.

The mechanism we propose is quite simple. Before asking feedback from the
client, the mechanism gives the provider the opportunity to acknowledge
failure, and reimburse the client. Only when the provider claims good service
does the reputation mechanism record the feedback of the client. Contradictory
reports (the provider claims good service, but the client submits negative
feedback) may only appear when one of the parties is lying, and therefore, both
the client and the provider are sanctioned: the provider suffers a loss as a
consequence of the negative report, while the client is given a small fine.

One equilibrium of the mechanism is when providers always do their best to
deliver the promised quality, and truthfully acknowledge the failures caused by
the environmental factors. Their ``honest'' behavior is motivated by the threat
that any mistake will drive the unsatisfied client away from the market. When
future transactions generate sufficient revenue, the provider does not afford
to risk losing a client, hence the equilibrium.

Unfortunately, this socially desired equilibrium is not unique. Clients can
occasionally accept bad service and keep returning to the same provider because
they don't have better alternatives. Moreover, since complaining for bad
service is sanctioned by the reputation mechanism, clients might be reluctant
to report negative feedback. Penalties for negative reports and the clients'
lack of choice drives the provider to occasionally cheat in order to increase
his revenue.

As a second result, we characterize the set of pareto-optimal equilibria of our
mechanism and prove that the amount of unreported cheating that can occur is
limited by two factors. The first factor limits the amount of cheating in
general, and is given by the quality of the alternatives available to the
clients. Better alternatives increase the expectations of the clients,
therefore the provider must cheat less in order to keep his customers.

The second factor limits the amount of unreported cheating, and represents the
cost incurred by clients to establish a reputation for reporting the truth. By
stubbornly exposing bad service when it happens, despite the fine imposed by
the reputation mechanism, the client signals to the provider that she is
committed to always report the truth. Such signals will eventually change the
strategy of the provider to full cooperation, who will avoid the punishment for
negative feedback. Having a reputation for reporting truthfully is of course,
valuable to the client; therefore, a rational client accepts to lie (and give
up the reputation) only when the cost of building a reputation for reporting
honestly is greater than the occasional loss created by tolerated cheating.
This cost is given by the ease with which the provider switches to cooperative
play, and by the magnitude of the fine imposed for negative feedback.

Concretely, this paper proceeds as follows. In Section \ref{related_work} we
describe related work, followed by a more detailed description of our setting
in Section \ref{setting}. Section \ref{GTanalysis} presents a game theoretic
model of our mechanism and an analysis of reporting incentives and equilibria.
Here we establish the existence of the cooperative equilibrium, and derive un
upper bound on the amount of cheating that can occur in any pareto-optimal
equilibrium.

In Section \ref{buidingReputation} we establish the cost of building a
reputation for reporting honestly, and hence compute an upper bound on the
percentage of false reports recorded by the reputation mechanism in any
equilibrium.

We continue in Section \ref{evilBuyers} by analyzing the impact of malicious
buyers that explicitly try to destroy the reputation of the provider. We give
some initial approximations on the worst case damage such buyers can cause to
providers. Further discussions, open issues and directions for future work are
discussed in Section \ref{future_work}. Finally, Section \ref{conclusions}
concludes our work.

\section{Related Work}
\label{related_work}

The notion of \textit{reputation} is often used in Game Theory to signal the
commitment of a player towards a fixed strategy. This is what we mean by saying
that \textit{clients establish a reputation for reporting the truth}: they
commit to always report the truth. Building a reputation usually requires some
incomplete information repeated game, and can significantly impact the set of
equilibrium points of the game. This is commonly referred to as the
\textit{reputation effect}, first characterized by the seminal papers of
\citeA{KMRW:1982}, \citeA{Kreps/Wilson:1982} and \citeA{Milgrom/Roberts:1982}.

The reputation effect can be extended to all games where a player ($A$) could
benefit from committing to a certain strategy $\sigma$ that is not credible in
a complete information game: e.g., a monopolist seller would like to commit to
fight all potential entrants in a chain-store game \cite{Selten:1978}, however,
this commitment is not credible due to the cost of fighting. In an incomplete
information game where the commitment type has positive probability, $A$'s
opponent ($B$) can at some point become convinced that $A$ is playing as if she
were the commitment type. At that point, $B$ will play a best response against
$\sigma$, which gives $A$ the desired payoff. Establishing a reputation for the
commitment strategy requires time and cost. When the higher future payoffs
offset the cost of building reputation, the reputation effect prescribes
minimum payoffs any equilibrium strategy should give to player $A$ (otherwise,
$A$ can profitably deviate by playing as if she were a commitment type).

\citeA{Fudenberg/Levine:1989} study the class of all repeated games in which a
long-run player faces a sequence of single-shot opponents who can observe all
previous games. If the long-run player is sufficiently patient and the
single-shot players have a positive prior belief that the long-run player might
be a commitment type, the authors derive a lower bound on the payoff received
by the long-run player in any Nash equilibrium of the repeated game. This
result holds for both finitely and infinitely repeated games, and is robust
against further perturbations of the information structure (i.e., it is
independent of what other types have positive probability).

\citeA{Schmidt:1993} provides a generalization of the above result for the two
long-run player case in a special class of games called of ``conflicting
interests'', when one of the players is sufficiently more patient than the
opponent. A game is of conflicting interests when the commitment strategy of
one player ($A$) holds the opponent ($B$) to his minimax payoff. The author
derives an upper limit on the number of rounds $B$ will not play a best
response to $A$'s commitment type, which in turn generates a lower bound on
$A$'s equilibrium payoff. For a detailed treatment of the reputation effect,
the reader is directed to the work of \citeA{Mailath/Samuelson:2006}.

In computer science and information systems research, \textit{reputation}
information defines some aggregate of feedback reports about past transactions.
This is the semantics we are using when referring to the reputation of the
provider. Reputation information encompasses a unitary appreciation of the
personal attributes of the provider, and influences the trusting decisions of
clients. Depending on the environment, reputation has two main roles: to
\emph{signal} the capabilities of the provider, and to \emph{sanction} cheating
behavior \cite{Kuwabara:2003}.

\emph{Signaling} reputation mechanisms allow clients to learn which providers
are the most capable of providing good service. Such systems have been widely
used in computational trust mechanisms. \citeA{Birk:2001} and \citeA{Bis:2000}
describe systems where agents use their direct past experience to recognize
trustworthy partners. The global efficiency of the market is clearly increased,
however, the time needed to build the reputation information prohibits the use
of this kind of mechanisms in a large scale online market.

A number of signaling reputation mechanisms also take into consideration
indirect reputation information, i.e., information reported by peers.
\citeA{Sch:2000} and \citeA{Yu:2002,Yu/Singh:2003} use social networks in order
to obtain the reputation of an unknown agent. Agents ask acquaintances several
hops away about the trustworthiness of an unknown agent. Recommendations are
afterwards aggregated into a single measure of the agent's reputation. This
class of mechanisms, however intuitive, does not provide any rational
participation incentives for the agents. Moreover, there is little protection
against untruthful reporting, and no guarantee that the mechanism cannot be
manipulated by a malicious provider in order to obtain higher payoffs.

Truthful reporting incentives for signaling reputation mechanisms are described
by \citeA{MRZ:2005}. Honest reports are explicitly rewarded by payments that
take into account the value of the submitted report, and the value of a report
submitted by another client (called the \textit{reference reporter}). The
payment schemes are designed based on \textit{proper scoring rules},
mathematical functions that make possible the revelation of private beliefs
\cite{Cooke:1991}. The essence behind honest reporting incentives is the
observation that the private information a client obtains from interacting with
a provider changes her belief regarding the reports of other clients. This
change in beliefs can be exploited to make honesty an ex-ante Nash equilibrium
strategy.

\citeA{JF_EC:2006} extend the above result by taking a computational approach
to designing incentive compatible payment schemes. Instead of using closed form
scoring rules, they compute the payments using an optimization problem that
minimizes the total budget required to reward the reporters. By also using
several reference reports and filtering mechanisms, they render the payment
mechanisms cheaper and more practical.

\citeA{Dellarocas:2005} presents a comprehensive investigation of binary
\textit{sanctioning} reputation mechanisms. As in our setting, providers are
equally capable of providing high quality, however, doing so requires costly
effort. The role of the reputation mechanism is to encourage cooperative
behavior by punishing cheating: negative feedback reduces future revenues
either by excluding the provider from the market, or by decreasing the price
the provider can charge in future transactions. Dellarocas shows that simple
information structures and decision rules can lead to efficient equilibria,
given that clients report honestly.

Our paper builds upon such mechanisms by addressing reporting incentives. We
will abstract away the details of the underlying reputation mechanism through
an explicit penalty associated with a negative feedback. Given that such high
enough penalties exist, any reputation mechanism (i.e., feedback aggregation
and trusting decision rules) can be plugged in our scheme.

In the same group of work that addresses reporting incentives, we mention the
work of \citeA{Braynov/Sandholm:2002}, \citeA{Dellarocas:2002_LNCS2531} and
\citeA{Papaioannou/Stamoulis:2005}. \citeauthor{Braynov/Sandholm:2002} consider
exchanges of goods for money and prove that a market in which agents are
trusted to the degree they deserve to be trusted is equally efficient as a
market with complete trustworthiness. By scaling the amount of the traded
product, the authors prove that it is possible to make it rational for sellers
to truthfully declare their trustworthiness. Truthful declaration of one's
trustworthiness eliminates the need of reputation mechanisms and significantly
reduces the cost of trust management. However, the assumptions made about the
trading environment (i.e. the form of the cost function and the selling price
which is supposed to be smaller than the marginal cost) are not common in most
electronic markets.

For e-Bay-like auctions, the Goodwill Hunting mechanism
\cite{Dellarocas:2002_LNCS2531} provides a way to make sellers indifferent
between lying or truthfully declaring the quality of the good offered for sale.
Momentary gains or losses obtained from misrepresenting the good's quality are
later compensated by the mechanism which has the power to modify the
announcement of the seller.

\citeA{Papaioannou/Stamoulis:2005} describe an incentive-compatible reputation
mechanism that is particularly suited for peer-to-peer applications. Their
mechanism is similar to ours, in the sense that both the provider and the
client are punished for submitting conflicting reports. The authors
experimentally show that a class of common lying strategies are successfully
deterred by their scheme. Unlike their results, our paper considers \emph{all}
possible equilibrium strategies and sets bounds on the amount of untruthful
information recorded by the reputation mechanism.

\section{The Setting}
\label{setting}

We assume an online market, where rational clients (she) repeatedly request the
same service from one provider (he). Every client repeatedly interacts with the
service provider, however, successive requests from the same client are always
interleaved with enough requests generated by other clients. Transactions are
assumed sequential, the provider does not have capacity constraints, and
accepts all requests.

The price of service is $p$ monetary units, and the service can have either
high ($q_1$) or low ($q_0$) quality. Only high quality is valuable to the
clients, and has utility $u(q_1)=u$. Low quality has utility 0, and can be
precisely distinguished from high quality. Before each round, the client can
decide to request the service from the provider, or quit the market and resort
to an outside provider that is completely trustworthy. The outside provider
always delivers high quality service, but for a higher price $p(1+\rho)$.

If the client decides to interact with the online provider, she issues a
request to the provider, and pays for the service. The provider can now decide
to exert low ($e_0$) or high ($e_1$) effort when treating the request. Low
effort has a normalized cost of 0, but generates only low quality. High effort
is expensive (normalized cost equals $c(e_1)=c$) and generates high quality
with probability $\alpha < 1$. $\alpha$ is fixed, and depends on the
environmental factors outside the control of the provider. $\alpha p > c$, so
that it is individually rational for providers to exert effort.

After exerting effort, the provider can observe the quality of the resulting
service. He can then decide to deliver the service as it is, or to acknowledge
failure and roll back the transaction by fully reimbursing\footnote{In reality,
the provider might also pay a penalty for rolling back the transaction. As long
as this penalty is small, the qualitative results we present in this paper
remain valid.} the client. We assume perfect delivery channels, such that the
client perceives exactly the same quality as the provider. After delivery, the
client inspects the quality of service, and can accuse low quality by
submitting a negative report to the reputation mechanism.

The reputation mechanism (RM) is unique in the market, and trusted by all
participants. It can oversee monetary transactions (i.e., payments made between
clients and the provider) and can impose fines on all parties. However, the RM
does not observe the effort level exerted by the provider, nor does it know the
quality of the delivered service.

The RM asks feedback from the client only if she chose to transact with the
provider in the current round (i.e., paid the price of service to the provider)
and the provider delivered the service (i.e., provider did not reimburse the
client). When the client submits negative feedback, the RM punishes both the
client and the provider: the client must pay a fine $\varepsilon$, and the
provider accumulates a negative reputation report.

\subsection{Examples}
\label{setting_example}

Although simplistic, this model retains the main characteristics of several
interesting applications. A delivery service for perishable goods (goods that
lose value past a certain deadline) is one of them. Pizza, for example, must be
delivered within 30 minutes, otherwise it gets cold and loses its taste. Hungry
clients can order at home, or drive to a more expensive local restaurant, where
they're sure to get a hot pizza. The price of a home delivered pizza is $p=1$,
while at the restaurant, the same pizza would cost $p(1+\rho) = 1.2$. In both
cases, the utility of a warm meal is $u = 2$.

The pizza delivery provider must exert costly effort to deliver orders within
the deadline. A courier must be dispatched immediately (high effort), for an
estimated cost of $c=0.8$. While such action usually results in good service
(the probability of a timely delivery is $\alpha = 99\%$), traffic conditions
and unexpected accidents (e.g., the address is not easily found) may still
delay some deliveries past the deadline.

Once at the destination, the delivery person, as well as the client, know if
the delivery was late or not. As it is common practice, the provider can
acknowledge being late, and reimburse the client. Clients may provide feedback
to a reputation mechanism, but their feedback counts only if they were not
reimbursed. The client's fine for submitting a negative report can be set for
example at $\varepsilon = 0.01$. The future loss to the provider caused by the
negative report (and quantified through $\bar{\varepsilon}$) depends on the
reputation mechanism.

A simplified market of car garagists or plumbers could fit the same model. The
provider is commissioned to repair a car (respectively the plumbing) and the
quality of the work depends on the exerted effort. High effort is more costly
but ensures a lasting result with high probability. Low effort is cheap, but
the resulting fix is only temporary. In both cases, however, the warranty
convention may specify the right of the client to ask for a reimbursement if
problems reoccur within the warranty period. Reputation feedback may be
submitted at the end of the warranty period, and is accepted only if
reimbursements didn't occur.

An interesting emerging application comes with a new generation of web services
that can optimally decide how to treat every request. For some service types, a
high quality response requires the exclusive use of costly resources. For
example, computation jobs require CPU time, storage requests need disk space,
information requests need queries to databases. Sufficient resources, is a
prerequisite, but not a guarantee for good service. Software and hardware
failures may occur, however, these failures are properly signaled to the
provider. Once monetary incentives become sufficiently important in such
markets, intelligent providers will identify the moral hazard problem, and may
act strategically as identified in our model.

\section{Behavior and Reporting Incentives}
\label{GTanalysis}

From game theoretic point of view, one interaction between the client and the
provider can be modeled by the extensive-form game ($G$) with imperfect public
information, shown in Figure \ref{fig:game}. The client moves first and decides
(at node 1) whether to play $in$ and interact with the provider, or to play
$out$ and resort to the trusted outside option.

Once the client plays $in$, the provider can chose at node 2 whether to exert
high or low effort (i.e., plays $e_1$ or $e_0$ respectively). When the provider
plays $e_0$ the generated quality is low. When the provider plays $e_1$, nature
chooses between high quality ($q_1$) with probability $\alpha$, and low quality
($q_0$) with probability $1-\alpha$. The constant $\alpha$ is assumed common
knowledge in the market. Having seen the resulting quality, the provider
delivers (i.e., plays $d$) the service, or acknowledges low quality and rolls
back the transaction (i.e., plays $l$) by fully reimbursing the client. If the
service is delivered, the client can report positive ($1$) or negative ($0$)
feedback.

\begin{figure}[t]
    \centerline{\includegraphics[width=0.9\columnwidth]{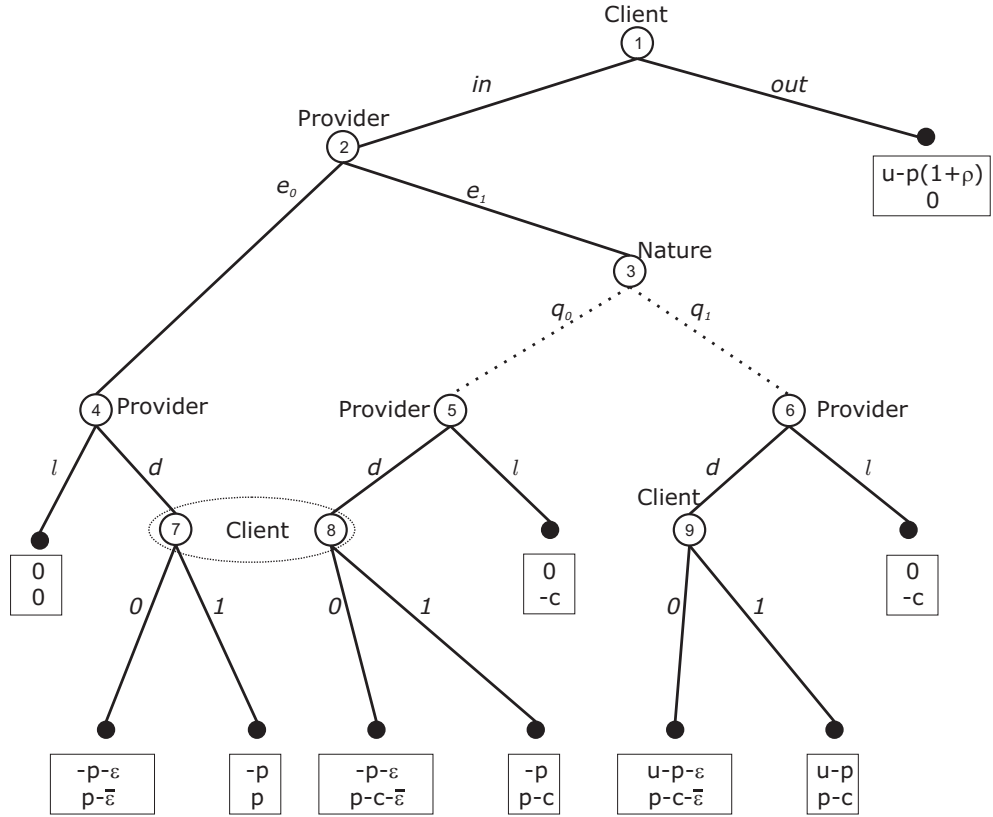}}
    \caption{The game representing one interaction. Empty circles represent decision nodes, edge
    labels represent actions, full circles represent terminal nodes and the dotted oval represents
    an information set. Payoffs are represented in
    rectangles, the top row describes the payoff of the client, the second row describes the payoff
    of the provider.}
    \label{fig:game}
\end{figure}

A pure strategy is a deterministic mapping describing an action for each of the
player's information sets. The client has three information sets in the game
$G$. The first information set is singleton and contains the node 1 at the
beginning of game when the client must decide between playing $in$ or $out$.
The second information set contains the nodes 7 and 8 (the dotted oval in
Figure $\ref{fig:game}$) where the client must decide between reporting $0$ or
$1$, given that she has received low quality, $q_0$. The third information set
is singleton and contains the node 9 where the client must decide between
reporting $0$ or $1$, given that she received high quality, $q_1$. The strategy
$in 0^{q_0}1^{q_1}$, for example, is the honest reporting strategy, specifying
that the client enters the game, reports $0$ when she receives low quality, and
reports $1$ when she receives high quality. The set of pure strategies of the
client is:
\[
    A_C = \{ out 1^{q_0}1^{q_1}, out 1^{q_0}0^{q_1}, out 0^{q_0}1^{q_1}, out
    0^{q_0}0^{q_1}, in 1^{q_0}1^{q_1}, in 1^{q_0}0^{q_1}, in 0^{q_0}1^{q_1}, in 1^{q_0}1^{q_1}
    \};
\]

Similarly, the set of pure strategies of the provider is:
\[
    A_P = \{ e_0 l, e_0 d, e_1 l^{q_0} l^{q_1}, e_1 l^{q_0} d^{q_1}, e_1 d^{q_0} l^{q_1}, e_1 d^{q_0} d^{q_1}\};
\]
where $e_1 l^{q_0} d^{q_1}$, for example, is the socially desired strategy: the
provider exerts effort at node 2, acknowledges low quality at node 5, and
delivers high quality at node 6. A pure strategy profile $s$ is a pair
$(s_C,s_P)$ where $s_C \in A_C$ and $s_P \in A_P$. If $\Delta(A)$ denotes the
set of probability distributions over the elements of $A$, $\sigma_C \in
\Delta(A_C)$ and $\sigma_P \in \Delta(A_P)$ are mixed strategies for the
client, respectively the provider, and $\sigma = (\sigma_C, \sigma_P)$ is a
mixed strategy profile.

The payoffs to the players depend on the chosen strategy profile, and on the
move of nature. Let $g(\sigma) = \big(g_C(\sigma), g_P(\sigma)\big)$ denote the
pair of expected payoffs received by the client, respectively by the provider
when playing strategy profile $\sigma$. The function $g : \Delta(A_C) \times
\Delta(A_P) \rightarrow \mathbb{R}^2$ is characterized in Table
\ref{tab:normalForm} and also describs the normal form transformation of $G$.
Besides the corresponding payments made between the client and the provider,
Table \ref{tab:normalForm} also reflects the influence of the reputation
mechanism, as further explained in Section \ref{reputation_mech}. The four
strategies of the client that involve playing $out$ at node 1 generate the same
outcomes, and therefore, have been collapsed for simplicity into a single row
of Table \ref{tab:normalForm}.

\begin{table}
\begin{center}
\scriptsize
\begin{tabular}{cc}
\begin{sideways}Provider\end{sideways} &

\begin{tabular}{c|c|c|c|c|c|}
& \multicolumn{5}{c}{Client} \\
& $in 1^{q_0} 1^{q_1}$    & $in 1^{q_0} 0^{q_1}$    & $in 0^{q_0} 1^{q_1}$    & $in 0^{q_0} 0^{q_1}$ & $out$    \\
\hline

    $e_0 l$

    &\onecell{$0$}{$d_S$}{$0$}
    &\onecell{$0$}{$d_S$}{$0$}
    &\onecell{$0$}{$d_S$}{$0$}
    &\onecell{$0$}{$d_S$}{$0$}
    &\onecell{$u - p(1+\rho)$}{}{$0$}
    \\ \hline

    $e_0 d$

    &\onecell{$- p$}{$c_S c_B$}{$p$}
    &\onecell{$- p$}{$c_S c_B$}{$p$}
    &\onecell{$- p-\varepsilon$}{$c_S d_B$}{$p - \bar{\varepsilon}$}
    &\onecell{$- p-\varepsilon$}{$c_S d_B$}{$p - \bar{\varepsilon}$}
    &\onecell{$u - p(1+\rho)$}{}{$0$}
    \\ \hline

    $e_1 l^{q_0} l^{q_1}$

    &\onecell{$0$}{}{$-c$}
    &\onecell{$0$}{}{$-c$}
    &\onecell{$0$}{}{$-c$}
    &\onecell{$0$}{}{$-c$}
    &\onecell{$u - p(1+\rho)$}{}{$0$}
    \\ \hline

    $e_1 l^{q_0} d^{q_1}$

    &\onecell{$\alpha (u-p)$}{}{$\alpha p -c$}
    &\onecell{$\alpha (u-p -\varepsilon)$}{}{$\alpha (p - \bar{\varepsilon}) -c$}
    &\onecell{$\alpha (u-p)$}{}{$\alpha p -c$}
    &\onecell{$\alpha (u-p -\varepsilon)$}{}{$\alpha (p - \bar{\varepsilon}) -c$}
    &\onecell{$u - p(1+\rho)$}{}{$0$}
    \\ \hline

    $e_1 d^{q_0} l^{q_1}$

    &\onecell{$-(1-\alpha) p$}{}{$(1-\alpha) p -c$}
    &\onecell{$-(1-\alpha) p$}{}{$(1-\alpha) p -c$}
    &\onecell{$-(1-\alpha)(p+\varepsilon)$}{}{$(1-\alpha)(p- \bar{\varepsilon}) -c$}
    &\onecell{$-(1-\alpha)(p+\varepsilon)$}{}{$(1-\alpha)(p- \bar{\varepsilon}) -c$}
    &\onecell{$u - p(1+\rho)$}{}{$0$}
    \\ \hline

    $e_1 d^{q_0} d^{q_1}$

    &\onecell{$\alpha u - p$}{}{$p - c$}
    &\onecell{$\alpha (u - \varepsilon) - p$}{}{$p - \alpha \bar{\varepsilon} -c$}
    &\onecell{$\alpha u - (1-\alpha)\varepsilon -p$}{}{$p - (1-\alpha)\bar{\varepsilon} -c$}
    &\onecell{$\alpha u - \varepsilon -p$}{}{ $p- \bar{\varepsilon} -c$}
    &\onecell{$u - p(1+\rho)$}{}{$0$}
    \\ \hline
\end{tabular}
\end{tabular}
\end{center}
\caption{Normal transformation of the extensive form game, $G$}
\label{tab:normalForm}
\end{table}

\subsection{The Reputation Mechanism}
\label{reputation_mech}

For every interaction, the reputation mechanism records one of the three
different signals it may receive: \emph{positive} feedback when the client
reports $1$, \emph{negative} feedback when the client reports $0$, and
\emph{neutral} feedback when the provider rolls back the transaction and
reimburses the client. In Figure \ref{fig:game} (and Table
\ref{tab:normalForm}) positive and neutral feedback do not influence the payoff
of the provider, while negative feedback imposes a punishment equivalent to
$\bar{\varepsilon}$.

Two considerations made us choose this representation. First, we associate
neutral and positive feedback with the same reward (0 in this case) because
intuitively, the acknowledgement of failure may also be regarded as ``honest''
behavior on behalf of the provider. Failures occur despite best effort, and by
acknowledging them, the provider shouldn't suffer.

However, neutral feedback may also result because the provider did not exert
effort. The lack of punishment for these instances contradicts the goal of the
reputation mechanism to encourage exertion of effort. Fortunately, the action
$e_0 l$ can be the result of rational behavior only in two circumstances, both
excusable: one, when the provider defends himself against a malicious client
that is expected to falsely report negative feedback (details in Section
\ref{evilBuyers}), and two, when the environmental noise is too big ($\alpha$
is too small) to justify exertion of effort. Neutral feedback can be used to
estimate the parameter $\alpha$, or to detect coalitions of malicious clients,
and indirectly, may influence the revenue of the provider. However, for the
simplified model presented above, positive and neutral feedback are considered
the same in terms of generated payoffs.

The second argument relates to the role of the RM to constrain the revenue of
the provider depending on the feedback of the client. There are several ways of
doing that. \citeA{Dellarocas:2005} describes two principles, and two
mechanisms that punish the provider when the clients submit negative reports.
The first, works by exclusion. After each negative report the reputation
mechanism bans the provider from the market with probability $\pi$. This
probability can be tuned such that the provider has the incentive to cooperate
almost all the time, and the market stays efficient. The second works by
changing the conditions of future trade. Every negative report triggers the
decrease of the price the next $N$ clients will pay for the service. For lower
values of $N$ the price decrease is higher, nonetheless, $N$ can take any value
in an efficient market.

Both mechanisms work because the future losses offset the momentary gain the
provider would have had by intentionally cheating on the client. Note that
these penalties are given endogenously by lost future opportunities, and
require some minimum premiums for trusted providers. When margins are not high
enough, providers do not care enough about future transactions, and will use
the present opportunity of cheating.

Another option is to use exogenous penalties for cheating. For example, the
provider may be required to buy a licence for operating in the
market\footnote{The reputation mechanism can buy and sell market licences}. The
licence is \emph{partially destroyed} by every negative feedback. Totaly
destroyed licences must be restored through a new payment, and remaining parts
can be sold if the provider quits the market. The price of the licence and the
amount that is destroyed by a negative feedback can be scaled such that
rational providers have the incentive to cooperate. Unlike the previous
solutions, this mechanism does not require minimum transaction margins as
punishments for negative feedback are directly subtracted from the upfront
deposit.

One way or another, all reputation mechanisms foster cooperation because the
provider associates value to client feedback. Let $V(R^+)$ and $V(R^-)$ be the
value of a positive, respectively a negative report. In the game in Figure
\ref{fig:game}, $V(R^+)$ is normalized to 0, and $V(R^-)$ is
$\bar{\varepsilon}$. By using this notation, we abstract away the details of
the reputation mechanism, and retain only the essential punishment associated
with negative feedback. Any reputation mechanism can be plugged in our scheme,
as long as the particular constraints (e.g., minimum margins for transactions)
are satisfied.

One last aspect to be considered is the influence of the reputation mechanism
on the future transactions of the client. If negative reports attract lower
prices, rational long-run clients might be tempted to falsely report in order
to purchase cheaper services in the future. Fortunately, some of the mechanisms
designed for single-run clients, do not influence the reporting strategy of
long-run clients. The reputation mechanism that only keeps the last $N$ reports
\cite{Dellarocas:2005} is one of them. A false negative report only influences
the next $N$ transactions of the provider; given that more than $N$ other
requests are interleaved between any two successive requests of the same
client, a dishonest reporter cannot decrease the price for her future
transactions.

The licence-based mechanism we have described above is another example. The
price of service remains unchanged, therefore reporting incentives are
unaffected. On the other hand, when negative feedback is punished by exclusion,
clients may be more reluctant to report negatively, since they also lose a
trading partner.

\subsection{Analysis of Equilibria}
\label{eqAnalysis}

The one-time game presented in Figure \ref{fig:game} has only one subgame
equilibrium where the client opts $out$. When asked to report feedback, the
client always prefers to report $1$ (reporting $0$ attracts the penalty
$\varepsilon$). Knowing this, the best strategy for the provider is to exert
low effort and deliver the service. Knowing the provider will play $e_0 d$, it
is strictly better for the client to play $out$.

The repeated game between the same client and provider may, however, have other
equilibria. Before analyzing the repeated game, let us note that every
interaction between a provider and a particular client can be strategically
isolated and considered independently. As the provider accepts all clients and
views them identically, he will maximize his expected revenue in each of the
isolated repeated games.

From now on, we will only consider the repeated interaction between the
provider and one client. This can be modeled by a \mbox{$T$-fold} repetition of
the stage game $G$, denoted $G^T$, where $T$ is finite or infinite. In this
paper we will deal with the infinite horizon case, however, the results
obtained can also be applied with minor modifications to finitely repeated
games where $T$ is large enough.

If $\hat{\delta}$ is the per period discount factor reflecting the probability
that the market ceases to exist after each round, (or the present value of
future revenues), let us denote by $\delta$ the expected discount factor in the
game $G^T$. If our client interacts with the provider on the average every $N$
rounds, $\delta = \hat{\delta}^N$.

The life-time expected payoff of the players is computed as:
\begin{small}
\begin{equation*}
\sum_{\tau =0}^T \delta^{\tau}g_i^\tau;
\end{equation*}
\end{small}
where $i \in \{C,P\}$ is the client, respectively the provider, $g_i^\tau$ is
the expected payoff obtained by player $i$ in the $\tau^{th}$ interaction, and
$\delta^{\tau}$ is the discount applied to compute the present day value of
$g_i^\tau$.

We will consider \emph{normalized} life-time expected payoffs, so that payoffs
in $G$ and $G^T$ can be expressed using the same measure:

\begin{small}
\begin{equation}
V_i = (1 - \delta) \sum_{\tau =0}^T \delta^{\tau}g_i^\tau;
\end{equation}
\end{small}

We define the \textit{average continuation payoff} for player $i$ from period
$t$ onward (and including period $t$) as:
\begin{small}
\begin{equation}
V_i^t= (1-\delta) \sum_{\tau =t}^T \delta^{\tau -t}g_i^\tau;
\end{equation}
\end{small}

The set of outcomes publicly perceived by both players after each round is:
\begin{small}
  \begin{equation*}
      Y = \{out, l, q_0 1, q_0 0, q_1 1, q_1 0\}
  \end{equation*}
\end{small}
where:
\begin{itemize}
    \item $out$ is observed when the client opts $out$,
    \item $l$ is observed when the provider acknowledges low quality and rolls back the
    transaction,
    \item $q_i \,j$ is observed when the provider delivers quality $q_i \in
    \{q_0,q_1\}$ and the client reports $j \in \{0,1\}$.
\end{itemize}
We denote by $h^t$ a specific public history of the repeated game out of the
set $H^t=(\times Y)^t$ of all possible histories up to and including period
$t$. In the repeated game, a public strategy $\sigma_i$ of player $i$ is a
sequence of maps $(\sigma_i^t)$, where $\sigma_i^t:H^{t-1} \rightarrow
\Delta(A_i)$ prescribes the (mixed) strategy to be played in round $t$, after
the public history $h^{t-1} \in H^{t-1}$. A \emph{perfect public equilibrium}
(PPE) is a profile of public strategies $\sigma = (\sigma_C,\sigma_P)$ that,
beginning at any time $t$ and given any public history $h^{t-1}$, form a Nash
equilibrium from that point on \cite{Fudenberg/Levine/Maskin:1994}.
$V_i^t(\sigma)$ is the continuation payoff to player $i$ given by the strategy
profile $\sigma$.

$G$ is a game with \emph{product structure} since any public outcome can be
expressed as a vector of two components $(y_C,y_P)$ such that the distribution
of $y_i$ depends only on the actions of player $i \in \{C,P\}$, the client,
respectively the provider. For such games, \citeA{Fudenberg/Levine/Maskin:1994}
establish a Folk Theorem proving that any feasible, individually rational
payoff profile is achievable as a PPE of $G^\infty$ when the discount factor is
close enough to 1. The set of feasible, individually rational payoff profiles
is characterized by:
\begin{itemize}
  \item the minimax payoff to the client, obtained by the option $out$:
  $\underline{V_C} = u - p(1+\rho)$;

  \item the minimax payoff to the provider, obtained when the provider plays
  $e_0 l$: $\underline{V_P} = 0$;

  \item the pareto optimal frontier (graphically presented in Figure \ref{fig:paretoOptFront})
  delimited by the payoffs given by (linear combination of)
  the strategy profiles $(in 1^{q_0} 1^{q_1}$, $e_1 l^{q_0} d^{q_1})$,
  $(in 1^{q_0} 1^{q_1}, e_1 d^{q_0} d^{q_1})$ and $(in 1^{q_0} 1^{q_1}, e_0d)$.
\end{itemize}
and contains more than one point (i.e., the payoff when the client plays $out$)
when $\alpha(u-p) > u - p(1+\rho)$ and $\alpha p -c > 0$. Both conditions
impose restrictions on the minimum margin generated by a transaction such that
the interaction is profitable. The PPE payoff profile that gives the provider
the maximum payoff is $(\underline{V_C}, \overline{V_P})$ where:
\begin{small}
  \begin{equation*}
    \overline{V_P} = \left \{
        \begin{array}{ll}
          \alpha*u -c - u + p(1+\rho) & \mbox{if $\rho \leq \frac{u(1-\alpha)}{p}$}\\
          p+ \frac{c(p\rho -u)}{\alpha u} & \mbox{if $\rho > \frac{u(1-\alpha)}{p}$}
        \end{array} \right.
  \end{equation*}
\end{small}
and $\underline{V_C}$ is defined above.

\begin{figure}
    \centerline{\includegraphics[width=0.55\columnwidth]{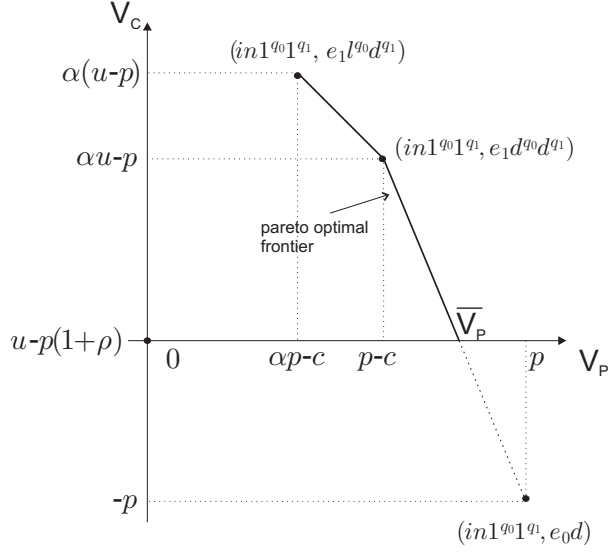}}
    \caption{The pareto-optimal frontier of the set of feasible, individually rational payoff profiles of $G$.}
    \label{fig:paretoOptFront}
\end{figure}

While completely characterizing the set of PPE payoffs for discount factors
strictly smaller than 1 is outside the scope of this paper, let us note the
following results:

First, if the discount factor is high enough (but strictly less than 1) with
respect to the profit margin obtained by the provider from one interaction,
there is at least one PPE such that the reputation mechanism records only
honest reports. Moreover, this equilibrium is pareto-optimal.

\begin{proposition}
  When $\delta > \frac{p}{p(1+\alpha) -c}$, the strategy profile:
\begin{itemize}
  \item the provider always exerts high effort, and delivers only high quality; if the
  client deviates from the equilibrium , the provider switches to $e_0 d$ for the rest of the rounds;

  \item the client always reports $1$ when asked to submit feedback; if the provider deviates,
  (i.e., she receives low quality), the client switches to $out$ for the rest
  of the rounds.
\end{itemize}
is a pareto-optimal PPE.
\label{prop:lowerBoundDelta}
\end{proposition}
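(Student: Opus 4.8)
The plan is to verify directly that the profile is a perfect public equilibrium by the one-shot deviation principle (which applies here because payoffs are discounted), and then to read off pareto-optimality from the description of the frontier given above. First I would write the profile as a two-phase automaton. In the \emph{cooperative} phase the players use $(in\,1^{q_0}1^{q_1},\,e_1 l^{q_0}d^{q_1})$, so that on the path the only public outcomes are $l$ (probability $1-\alpha$) and $q_1 1$ (probability $\alpha$); play moves permanently to the \emph{punishment} phase as soon as any other outcome is observed, namely $q_1 0$ (the client has misreported) or $q_0 1$ or $q_0 0$ (the provider has delivered low quality). The punishment phase is the repetition of the stage-game equilibrium $(out,\,e_0 d)$; this is the natural formalization of the informal ``switch to $e_0 d$'' / ``switch to $out$'' clauses. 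From Table~\ref{tab:normalForm} the normalized continuation payoffs are $\big(\alpha(u-p),\,\alpha p-c\big)$ on the cooperative path and $\big(u-p(1+\rho),\,0\big)$ in the punishment phase; note that the threshold $\frac{p}{p(1+\alpha)-c}$ is below $1$ precisely because $\alpha p>c$, so the hypothesis on $\delta$ is consistent.

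The easy verifications come next. The punishment phase is self-enforcing: facing $out$ the provider earns $0$ whatever he does, and facing $e_0 d$ the client strictly prefers $out$ to $in$ (which would give $-p<u-p(1+\rho)$), so its repetition is a PPE from any history. For the client in the cooperative phase: deviating to $out$ yields $u-p(1+\rho)$ today versus $\alpha(u-p)$ on the equilibrium path, and since $\alpha(u-p)\ge u-p(1+\rho)$ (a maintained assumption of the model, alongside $\alpha p>c$) this is unprofitable however the continuation is specified; deviating to report $0$ after receiving $q_1$ costs the fine $\varepsilon$ today and replaces the stream $\alpha(u-p)$ by $u-p(1+\rho)$ forever, both adverse; and at the off-path information set where she was delivered $q_0$, reporting $1$ rather than $0$ avoids the fine with no other effect (punishment is triggered either way), so it is optimal.

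The crux is the provider's incentive constraint in the cooperative phase. Enumerating his one-shot deviations: $e_0 l$ and $e_1 l^{q_0}l^{q_1}$ yield at most $0$ this period and leave the cooperative phase intact, hence lose against $\alpha p-c>0$; $e_1 d^{q_0}l^{q_1}$ induces the same continuation distribution as $e_1 d^{q_0}d^{q_1}$ but has a smaller current payoff, hence is dominated by it; the two genuine candidates are $e_0 d$ (collect $p$ now, then be detected for sure, continuation $0$) and $e_1 d^{q_0}d^{q_1}$ (collect $p-c$ now; with probability $\alpha$ the outcome is the on-path $q_1 1$ and cooperation resumes at value $\alpha p-c$, with probability $1-\alpha$ the outcome $q_0 1$ triggers punishment). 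The no-deviation inequality for the second candidate, $\alpha p-c\ge(1-\delta)(p-c)+\delta\alpha(\alpha p-c)$, simplifies — the common factor $1-\alpha$ cancels — to exactly $\delta\ge\frac{p}{p(1+\alpha)-c}$; the one for $e_0 d$, $\alpha p-c\ge(1-\delta)p$, reads $\delta\ge\frac{p(1-\alpha)+c}{p}$, which is implied by the former since $\big(p(1-\alpha)+c\big)\big(p(1+\alpha)-c\big)=p^2-(\alpha p-c)^2<p^2$. Because the value of any mixed deviation is a convex combination of values of pure deviations, mixed deviations are covered too; under the strict hypothesis $\delta>\frac{p}{p(1+\alpha)-c}$ every inequality is strict, so the profile is a PPE.

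Finally, its on-path payoff $\big(\alpha(u-p),\,\alpha p-c\big)$ is exactly the payoff of the profile $(in\,1^{q_0}1^{q_1},\,e_1 l^{q_0}d^{q_1})$, one of the three vertices of the pareto-optimal frontier identified above, so the equilibrium is pareto-optimal. I expect the only real subtlety to be recognizing that the binding deviation for the provider is $e_1 d^{q_0}d^{q_1}$ — incurring the effort cost $c$ and then delivering whatever nature produced — rather than the cheaper-looking ``shirk and deliver'' $e_0 d$: the former escapes detection with probability $\alpha$ and thereby keeps the profitable relationship alive, and it is this gamble that pins the threshold on $\delta$ at $\frac{p}{p(1+\alpha)-c}$.
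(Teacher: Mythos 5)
Your proposal is correct and follows essentially the same route as the paper's own proof: verify that no one-shot deviation from the trigger profile is profitable, with $e_1 d^{q_0}d^{q_1}$ and $e_0 d$ as the relevant provider deviations (the former binding and yielding exactly the threshold $\delta > \frac{p}{p(1+\alpha)-c}$), and read off pareto-optimality from the equilibrium payoff $\big(\alpha(u-p),\,\alpha p-c\big)$. Your write-up is merely more explicit than the paper's (self-enforcement of the punishment phase, the dominated deviations, mixed deviations, and the maintained assumption $\alpha(u-p)\geq u-p(1+\rho)$ for the client's participation), but the underlying argument is the same.
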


\begin{proof}
  It is not profitable for the client to deviate from the equilibrium path.
  Reporting $0$ attracts the penalty $\varepsilon$ in the present round, and
  the termination of the interaction with the provider (the provider stops exerting
  effort from that round onwards).

  The provider, on the other hand, can momentarily gain by deviating to $e_1 d^{q_0} d^{q_1}$
  or $e_0 d$. A deviation to $e_1 d^{q_0} d^{q_1}$ gives an expected
  momentary gain of $p(1-\alpha)$ and an expected continuation loss of $(1-\alpha)(\alpha p
  -c)$. A deviation to $e_0 d$ brings an expected momentary gain equal to $(1-\alpha)p
  +c$ and an expected continuation loss of $\alpha p -c$. For the discount
  factor satisfying our hypothesis, both deviations are not profitable.
  The discount factor is low enough with respect to profit margins, such that
  the future revenues given by the equilibrium strategy offset the momentary
  gains obtained by deviating.

  The equilibrium payoff profile is $(V_C,V_P)=(\alpha(u-p), \alpha p -c)$,
  which is pareto-optimal and socially efficient.
\end{proof}

Second, we can prove that the client never reports negative feedback in any
pareto-optimal PPE, regardless the value of the discount factor. The
restriction to pareto-optimal is justifiable by practical reasons: assuming
that the client and the provider can somehow negotiate the equilibrium they are
going to play, it makes most sense to choose one of the pareto-optimal
equilibria.

\begin{proposition}
    The probability that the client reports negative feedback on the
    equilibrium path of any pareto-optimal PPE strategy is zero.
\label{prop:noZero}
\end{proposition}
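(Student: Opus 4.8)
The plan is to argue by contradiction. Suppose $\sigma$ is a pareto-optimal PPE under which, with positive probability along its path, the client reports $0$ after being delivered some quality $q_i\in\{q_0,q_1\}$; I will exhibit a PPE $\sigma'$ giving \emph{both} players strictly higher normalized payoffs, contradicting pareto-optimality. The lever is that the fine $\varepsilon$ and the reputation penalty $\bar\varepsilon$ attached to a negative report are \emph{deadweight} losses --- neither is a transfer between the two parties --- so every on-path negative report destroys a joint surplus of $\varepsilon+\bar\varepsilon$ that can be returned to the players without relaxing anyone's incentive to deviate.

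The construction of $\sigma'$ is local. Fix a node $n$, reached with positive probability under $\sigma$, at which the client, having received delivered quality $q_i$, reports $0$ with probability $\beta\in(0,1]$ and $1$ with probability $1-\beta$; let $w_C(q_i j)$ and $w_P(q_i j)$ be the $\sigma$-continuation values after the public outcome $q_i j$. Let $\sigma'$ agree with $\sigma$ everywhere except at $n$: there the client reports $1$ for sure, a public randomization device then continues with $\sigma|_{q_i 0}$ with probability $\beta$ and with $\sigma|_{q_i 1}$ with probability $1-\beta$, and if the client unilaterally reports $0$ instead, play proceeds exactly as $\sigma$ prescribes after $q_i 0$. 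Because the \emph{distribution} over continuations induced at $n$ is unchanged, play from $n$ onward is stochastically identical under $\sigma$ and $\sigma'$, except that in the single period at $n$ the client no longer pays $\varepsilon$ and the provider no longer suffers $\bar\varepsilon$ on the (probability-$\beta$) event that a negative report would have been recorded. Hence $V_C(\sigma')>V_C(\sigma)$ and $V_P(\sigma')>V_P(\sigma)$.

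To see $\sigma'$ is a PPE: assuming (as in the cited folk-theorem result) that public randomization is available, a public convex combination of PPE continuations is again a PPE continuation, and all incentive constraints other than the client's report at $n$ carry over verbatim from $\sigma$, since post-deviation play is identical to $\sigma$ --- in particular the provider's deviation payoffs are unchanged while his equilibrium continuation payoffs only rise. At $n$, reporting $1$ is optimal: switching from $0$ to $1$ saves the fine, a gain $(1-\delta)\varepsilon$, at the cost of moving, with probability $1-\beta$, from $\sigma|_{q_i 0}$ to $\sigma|_{q_i 1}$, a loss $\delta(1-\beta)\bigl(w_C(q_i0)-w_C(q_i1)\bigr)$; but optimality of reporting $0$ under $\sigma$ forces the client's indifference when $\beta\in(0,1)$, i.e.\ $\delta\bigl(w_C(q_i0)-w_C(q_i1)\bigr)=(1-\delta)\varepsilon$, so the loss is $(1-\beta)(1-\delta)\varepsilon<(1-\delta)\varepsilon$, and the comparison is trivial when $\beta=1$. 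Thus $\sigma'$ is a PPE strictly Pareto-dominating $\sigma$, and since $n$ was an arbitrary on-path negative-report node the claim follows.

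The step needing the most care --- and the main obstacle --- is checking that the provider's play just before $n$ is undisturbed. If he strictly prefers to deliver $q_i$ there, nothing changes; if he is prescribed to roll the transaction back, $n$ is off-path, contrary to hypothesis; the awkward case is a provider randomizing between delivering $q_i$ and rolling back, since raising the value of the ``deliver'' branch by $(1-\delta)\beta\bar\varepsilon$ breaks his indifference and forces him to deliver for sure, which can change other continuation values (and in particular may move the client off the rollback continuation she valued). I would handle this not node-by-node but by performing the cancellation simultaneously at all on-path negative-report nodes and invoking the recursive self-generation characterization of the PPE payoff set, which under public randomization is convex: a pareto-optimal point is generated by a first-period action together with continuation payoffs in that set, and replacing any first-period negative report by a positive one with the same (public-randomization) continuation strictly shrinks surplus loss, contradicting that the point was on the frontier. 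Making this global bookkeeping airtight --- showing the amended profile still decomposes into feasible PPE continuations --- is the real content of the argument.
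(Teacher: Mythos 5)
Your construction is essentially the paper's own proof (Appendix~\ref{ap:noZero}). The paper also works through the Abreu--Pearce--Stacchetti decomposition: it takes a PPE payoff $V=(V_C,V_P)$ that cannot be improved for the client, replaces the first-period report $0$ after $q_0$ (and likewise after $q_1$) by $1$, and uses convexity of the publicly-correlated PPE payoff set to put the continuation after $q_0 1$ equal to the mixture $\beta_0 W(q_0 0)+\beta_1 W(q_0 1)$ --- exactly your public-randomization device --- so the client gains $(1-\delta)\beta_0\varepsilon$, contradicting the choice of $V$; a recursion on continuation payoffs (its Step 3) then extends the claim to the whole equilibrium path, which is the role played by your ``do it at all on-path nodes simultaneously'' remark. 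The differences are minor: you aim at a strict improvement for \emph{both} players by also counting the provider's saved $\bar\varepsilon$, whereas the paper only needs the client's saved fine and (nominally) holds $V_P$ fixed; and your client-side incentive check at $n$ (indifference when $\beta\in(0,1)$, trivial when $\beta=1$) is made explicit where the paper merely asserts enforceability.

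The one step your proposal leaves open is the one you flag yourself: the provider's incentive constraints under the modified profile, in particular when he mixes between delivering $q_0$ and rolling back. Removing the negative report raises the value of \emph{every} provider action in proportion to the probability with which that action delivers low quality, so a deviation that delivers $q_0$ more often than the prescribed action gains more than the equilibrium strategy does, and a previously binding constraint can break; your earlier sentence that ``the provider's deviation payoffs are unchanged'' is therefore not correct, as your final paragraph in effect concedes. You defer the fix to ``global bookkeeping'' via self-generation without carrying it out, so at precisely its crux the argument is a plan rather than a proof. In fairness, the paper's proof is silent on the same point --- it asserts that $\sigma'$ and $\{W'(y)\}$ enforce the improved payoff without checking the provider's constraint --- so you have not missed an ingredient the paper supplies; but to close the argument you would need either to verify that constraint directly or to show how to re-balance the continuations after $l$ versus $q_0 1$ so as to restore the provider's indifference while preserving the client's strict gain of the fine.
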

\begin{proofsketch}
  The full proof presented in Appendix \ref{ap:noZero}
  follows the following steps. Step 1, all equilibrium payoffs
  can be expressed by adding the present round payoff to the discounted
  continuation payoff from the next round onward. Step 2, take the PPE payoff profile
  $V = (V_C,V_P)$, such that there is no other PPE payoff profile $V' = (V_C', V_P)$
  with $V_C < V_C'$. The client never reports negative feedback in the
  first round of the equilibrium that gives $V$. Step 3, the equilibrium continuation
  payoff after the first round also satisfies the conditions set for $V$. Hence, the probability that the
  client reports negative feedback on the equilibrium path that gives $V$ is 0.
  Pareto-optimal PPE payoff profiles clearly satisfy the definition of $V$,
  hence the result of the proposition.
\end{proofsketch}

The third result we want to mention here, is that there is an upper bound on
the percentage of false reports recorded by the reputation mechanism in any of
the pareto-optimal equilibria.

\begin{proposition}
  The upper bound on the percentage of false reports recorded by the reputation
  mechanism in any PPE equilibrium is:
  \begin{small}
    \begin{equation}
      \gamma \leq \left\{ \begin{array}{ll}
        \frac{(1-\alpha)(p-u) + p\rho}{p} & \mbox{if $p\rho \leq u(1-\alpha)$}; \\
        \frac{p \rho}{u} & \mbox{if $p\rho > u(1-\alpha)$}
      \end{array} \right.
      \label{eq:boundNoRep}
    \end{equation}
  \end{small}
  \label{prop:boundNoRep}
\end{proposition}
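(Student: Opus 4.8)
The strategy is to bound the long-run frequency of negative feedback by comparing the client's equilibrium continuation payoff against her outside-option payoff $\underline{V_C} = u - p(1+\rho)$, which every PPE must deliver, and against the provider's individual-rationality constraint $\underline{V_P} = 0$. The key observation is that false reports recorded by the mechanism must be of the form ``$q_1 0$'' — the client receives high quality but reports $0$ — since by Proposition~\ref{prop:noZero} the client never reports $0$ on the equilibrium path of a pareto-optimal PPE, so any recorded negative report in such an equilibrium corresponds to a genuine low-quality delivery; hence ``false'' here must be read relative to the provider's claim, i.e. the recorded negatives that are ``unreported cheating'' are exactly the $q_0$ deliveries the provider pushed through rather than rolling back. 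So $\gamma$ is the long-run fraction of rounds in which the provider delivers low quality and it gets recorded as a negative report.

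First I would set up the accounting identity: over the equilibrium path, let $\gamma$ be the limiting fraction of rounds that produce a recorded negative report (equivalently, rounds in which low quality is delivered, given Proposition~\ref{prop:noZero}). In each such round the client loses the full price $p$ plus, possibly, the fine $\varepsilon$; in the remaining rounds she gets at most the ideal surplus. Writing the client's normalized continuation payoff $V_C$ as a convex combination over the outcome frequencies and using $V_C \geq \underline{V_C}$ gives one inequality, while $V_P \geq 0$ (the provider would otherwise play $e_0 l$ forever) gives a second. The two cases in the bound correspond to which constraint is binding: when $\rho$ is small ($p\rho \leq u(1-\alpha)$) the provider's IR constraint together with the client's participation pins down $\gamma$ at $\frac{(1-\alpha)(p-u)+p\rho}{p}$; when $\rho$ is large the client's participation constraint alone binds and yields $\gamma \leq \frac{p\rho}{u}$. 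This mirrors exactly the two-branch structure of $\overline{V_P}$ stated just above Proposition~\ref{prop:lowerBoundDelta}, and in fact the bound should be obtainable directly from that expression: the provider's extra revenue from cheating a fraction $\gamma$ of the time must not exceed the slack between the cooperative payoff and $\overline{V_P}$, while the client's lost utility from that cheating must not push her below $\underline{V_C}$.

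The main obstacle, and the step I would spend the most care on, is justifying that these static per-round inequalities really do constrain the \emph{limiting frequency} in a PPE rather than just round-one behavior. The honest way to do this is the recursive argument sketched for Proposition~\ref{prop:noZero}: decompose $V = (1-\delta)g^0 + \delta V^1$, observe that $V^1$ is again a PPE continuation payoff and so again satisfies the IR bounds, and argue by a limiting/averaging argument (letting $\delta \to 1$, or summing the telescoped inequalities) that the time-average outcome distribution must satisfy the same constraints the one-shot convex combination does. One has to be careful that the fine $\varepsilon$ and the reputational penalty $\bar\varepsilon$ drop out of the frequency bound in the limit — they affect incentive constraints (and hence which equilibria exist) but not the feasibility region that determines $\gamma$ — so the clean statement of the proposition is genuinely a statement about the feasible set of Figure~\ref{fig:paretoOptFront}, and the proof reduces to reading off where the segment from $(\underline{V_C},\overline{V_P})$ meets the low-quality-delivery axis. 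I would close by checking the two boundary cases against the endpoints of the pareto frontier to confirm the breakpoint $p\rho = u(1-\alpha)$ is correct.
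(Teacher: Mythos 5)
There is a genuine gap, on two counts. First, you misidentify the event being counted. By Proposition~\ref{prop:noZero} the client reports $0$ with probability zero on the equilibrium path, so the outcome $q_1 0$ also has probability zero and nothing here is ``recorded as a negative report.'' The false reports of the proposition are \emph{false positives}: the outcome $q_0 1$, where the provider delivers low quality and the client nonetheless reports $1$. Your text drifts toward this (``the $q_0$ deliveries the provider pushed through'') but then defines $\gamma$ as the frequency of recorded \emph{negative} reports, which contradicts the very proposition you are building on. The paper simply sets $\gamma = (1-\delta)\sum_t \delta^t \mu_t(q_0 1)$, the discounted frequency of the outcome $q_0 1$ under the equilibrium-induced outcome distribution $\mu_t$; no $\delta\to 1$ limit or extra averaging argument is needed, because the normalized payoff is already linear in these discounted frequencies.

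Second, and more seriously, your two-constraint plan would not produce the stated bound. The provider's individual-rationality constraint $V_P\geq 0$ is slack here: delivering low quality and collecting $p$ \emph{raises} the provider's per-round payoff, so $V_P\geq 0$ places no upper limit on cheating and cannot generate the branch $\frac{(1-\alpha)(p-u)+p\rho}{p}$. The ingredient you are missing is the \emph{technological feasibility} constraint: even under high effort, high quality occurs with probability at most $\alpha$, so $\mu_t(q_0 1)+\mu_t(l) \geq \frac{1-\alpha}{\alpha}\mu_t(q_1 1)$, i.e.\ the fraction of rounds yielding $u-p$ to the client is at most $\min(1-\gamma,\alpha)$. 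Combining only this with the client's participation constraint $V_C(\sigma)\geq \underline{V_C}=u-p(1+\rho)$ gives
\begin{equation*}
  -\gamma p + (u-p)\min(1-\gamma,\alpha) \;\geq\; u-p(1+\rho),
\end{equation*}
and the two branches of Equation~(\ref{eq:boundNoRep}) come from which argument of the $\min$ is active, with the switch exactly at $p\rho = u(1-\alpha)$. Without the $\alpha$-constraint you would only obtain $\gamma\leq p\rho/u$ in all cases, which is strictly weaker than the first branch when $p\rho \leq u(1-\alpha)$ (e.g.\ in the pizza example with $\rho=0.01$ it gives $0.005$ instead of $0$). So the correct repair is to drop the provider-IR step, add the feasibility constraint on the frequency of $q_1$, and carry out the single linear-programming-style computation above on the discounted outcome frequencies.
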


\begin{proofsketch}
  The full proof presented in Appendix \ref{ap:boundNoRep}
  builds directly on the result of Proposition \ref{prop:noZero}.
  Since clients never report negative feedback along pareto-optimal equilibria,
  the only false reports recorded by the reputation mechanism appear when the
  provider delivers low quality, and the client reports positive feedback.
  However, any PPE profile must give the client at least $\underline{V_C} = u - p(1+\rho)$,
  otherwise the client is better off by resorting to the outside option.
  Every round in which the provider deliberatively delivers low quality gives
  the client a payoff strictly smaller than $u - p(1+\rho)$. An equilibrium payoff
  greater than $\underline{V_C}$ is therefore possible only when the percentage
  of rounds where the provider delivers low quality is bounded. The same bound limits
  the percentage of false reports recorded by the reputation mechanism.
\end{proofsketch}

For a more intuitive understanding of the results presented in this section,
let us refer to the pizza delivery example detailed in Section
\ref{setting_example}. The price of a home delivered pizza is $p=1$, while at
the local restaurant the same pizza would cost $p(1+\rho) = 1.2$. The utility
of a warm pizza to the client is $u=2$, the cost of delivery is $c=0.8$ and the
probability that unexpected traffic conditions delay the delivery beyond the 30
minutes deadline (despite the best effort of the provider) is $1- \alpha =
0.01$.

The client can secure a minimax payoff of $\underline{V_C} = u - p(1+\rho) =
0.8$ by always going out to the restaurant. However, the socially desired
equilibrium happens when the client orders pizza at home, and the pizza service
exerts effort to deliver pizza in time: in this case the payoff of the client
is $V_C = \alpha (u - p) = 0.99$, while the payoff of the provider is $V_P =
\alpha p - c = 0.19$.

Proposition \ref{prop:lowerBoundDelta} gives a lower bound on the discount
factor of the pizza delivery service such that repeated clients can expect the
socially desired equilibrium. This bound is $\delta = \frac{p}{p(1+\alpha) -c}
= 0.84$; assuming that the daily discount factor of the pizza service is
$\hat{\delta} = 0.996$, the same client must order pizza at home at least once
every 6 weeks. The values of the discount factors can also be interpreted in
terms of the minimum number of rounds the client (and the provider) will likely
play the game. For example, the discount factor can be viewed as the
probability that the client (respectively the provider) will ``live'' for
another interaction in the market. It follows that the average lifetime of the
provider is at least $1/(1-\hat{\delta}) = 250$ interactions (with all
clients), while the average lifetime of the client is at least $1/(1-\delta) =
7$ interactions (with the same pizza delivery service). These are clearly
realistic numbers.

Proposition \ref{prop:boundNoRep} gives an upper bound on the percentage of
false reports that our mechanism may record in equilibrium from the clients. As
 $u(1-\alpha) = 0.02 < 0.2 = p\rho$, this limit is:
\begin{small}
\begin{equation*}
  \gamma = \frac{p\rho}{u} = 0.1;
\end{equation*}
\end{small}
It follows that at least $90\%$ of the reports recorded by our mechanism (in
any equilibrium) are correct. The false reports (false positive reports) result
from rare cases where the pizza delivery is intentionally delayed to save some
cost but clients do not complain. The false report can be justified, for
example, by the provider's threat to refuse future orders from clients that
complain. Given that late deliveries are still rare enough, clients are better
off with the home delivery than with the restaurant, hence they accept the
threat. As other options become available to the clients (e.g., competing
delivery services) the bound $\gamma$ will decrease.

Please note that the upper bound defined by Proposition \ref{prop:boundNoRep}
only depends on the outside alternative available to the provider, and is not
influenced by the punishment $\bar{\varepsilon}$ introduced by the reputation
mechanism. This happens because the revenue of a client is independent of the
interactions of other clients, and therefore, on the reputation information as
reported by other clients. Equilibrium strategies are exclusively based on the
direct experience of the client. In the following section, however, we will
refine this bound by considering that clients can build a reputation for
reporting honestly. There, the punishment $\bar{\varepsilon}$ plays an
important role.

\section{Building a Reputation for Truthful Reporting}
\label{buidingReputation}

An immediate consequence of Propositions \ref{prop:noZero} and
\ref{prop:boundNoRep} is that the provider can extract all of the surplus
created by the transactions by occasionally delivering low quality, and
convincing the clients not to report negative feedback (providers can do so by
promising sufficiently high continuation payoffs that prevent the client to
resort to the outside provider). Assuming that the provider has more ``power''
in the market, he could influence the choice of the equilibrium strategy to one
that gives him the most revenue, and holds the clients close to the minimax
payoff $\underline{V_C} = u - p(1+\rho)$ given by the outside
option.\footnote{All pareto-optimal PPE payoff profiles are also
renegotiation-proof \cite{Bernheim/Ray:1989,Farrell/Maskin:1989}. This follows
from the proof of Proposition \ref{eq:boundNoRep}: the continuation payoffs
enforcing a pareto-optimal PPE payoff profile are also pareto-optimal.
Therefore, clients falsely report positive feedback even under the more
restrictive notion of negotiation-proof equilibrium.}

However, a client who could commit to report honestly, (i.e., commit to play
the strategy $s_C^* = in 0^{q_0} 1^{q_1}$) would benefit from cooperative
trade. The provider's best response against $s^*_C$ is to play $e_1 l^{q_0}
d^{q_1}$ repeatedly, which leads the game to the socially efficient outcome.
Unfortunately the commitment to $s_C^*$ is not credible in the complete
information game, for the reasons explained in Section \ref{eqAnalysis}.

Following the results of \citeA{KMRW:1982}, \citeA{Fudenberg/Levine:1989} and
\citeA{Schmidt:1993} we know that such honest reporting commitments may become
credible in a game with incomplete information. Suppose that the provider has
incomplete information in $G^{\infty}$, and believes with non-negative
probability that he is facing a committed client that always reports the truth.
A rational client can then ``fake'' the committed client, and ``build a
reputation'' for reporting honestly. When the reputation becomes credible, the
provider will play $e_1 l^{q_0} d^{q_1}$ (the best response against $s_C^*$),
which is better for the client than the payoff she would obtain if the provider
knew she was the ``rational'' type.

As an effect of reputation building, the set of equilibrium points is reduced
to a set where the payoff to the client is higher than the payoff obtained by a
client committed to report honestly. As anticipated from Proposition
\ref{prop:boundNoRep}, a smaller set of equilibrium points also reduces the
bound of false reports recorded by the reputation mechanism. In certain cases,
this bound can be reduced to almost zero.

Formally, incomplete information can be modeled by a perturbation of the
complete information repeated game $G^{\infty}$ such that in period 0 (before
the first round of the game is played) the ``type'' of the client is drawn by
nature out of a countable set $\Theta$ according to the probability measure
$\mu$. The client's payoff now additionally depends on her type. We say that in
the perturbed game $G^{\infty}(\mu)$ the provider has incomplete information
because he is not sure about the true type of the client.

Two types from $\Theta$ have particular importance:
\begin{itemize}
  \item The ``normal'' type of the client, denoted by $\theta_0$, is the rational
  client who has the payoffs presented in Figure \ref{fig:game}.

  \item The ``commitment'' type of
  the client, denoted by $\theta^*$, always prefers to play the commitment
  strategy $s_C^*$. From a rational perspective, the commitment type client
  obtains an arbitrarily high supplementary reward for reporting the truth.
  This external reward makes the strategy $s_C^*$ the dominant strategy, and
  therefore, no commitment type client will play anything else than $s_C^*$.
\end{itemize}

In Theorem \ref{th:equilibrium} we give an upper bound $k_P$ on the number of
times the provider delivers low quality in $G^{\infty}(\mu)$, given that he
always observes the client reporting honestly.

The intuition behind this result is the following. The provider's best response
to a honest reporter is $e_1 l^{q_0} d^{q_1}$: always exert high effort, and
deliver only when the quality is high. This gives the commitment type client
her maximum attainable payoff in $G^{\infty}(\mu)$, corresponding to the
socially efficient outcome. The provider, however, would be better off by
playing against the normal type client, against whom he can obtain an expected
payoff greater than $\alpha p -c$.

The normal type client may be distinguished from a commitment type client only
in the rounds when the provider delivers low quality: the commitment type
always reports negative feedback, while the normal type might decide to report
positive feedback in order to avoid the penalty $\varepsilon$. The provider can
therefore decide to deliver low quality to the client in order to test her real
type. The question is, how many times should the provider test the true type of
the client.

Every failed test (i.e., the provider delivers low quality and the client
reports negative feedback) generates a loss of $-\bar{\varepsilon}$ to the
provider, and slightly enforces the belief that the client reports honestly.
Since the provider cannot wait infinitely for future payoffs, there must be a
time when the provider will stop testing the type of the provider, and accepts
to play the socially efficient strategy, $e_1 l^{q_0} d^{q_1}$.

The switch to the socially efficient strategy is not triggered by a revelation
of the client's type. The provider believes that the client \emph{behaves} as
if she were a commitment type, not that the client \emph{is} a commitment type.
The client may very well be a normal type who chooses to mimic the commitment
type, in the hope that she will obtain better service from the provider.
However, further trying to determine the true type of the client is too costly
for the provider. Therefore, the provider chooses to play $e_1 l^{q_0}
d^{q_1}$, which is the best response to the commitment strategy $s_C^*$.


\begin{theorem}
If the provider has incomplete information in $G^{\infty}$, and assigns
positive probability to the normal and commitment type of the client
($\mu(\theta_0)>0$, $\mu_0^* = \mu(\theta^*)>0$), there is a finite upper
bound, $k_P$, on the number of times the provider delivers low quality in any
equilibrium of $G^{\infty}(\mu)$. This upper bound is:
\begin{small}
    \begin{equation}
    k_P = \left\lfloor \frac{\ln(\mu_0^*)}{\ln \left(
    \frac{\delta (\overline{V_P} - \alpha p + c) + (1-\delta)p}
            {\delta (\overline{V_P} - \alpha p + c) + (1-\delta)\bar{\varepsilon}}
    \right)} \right\rfloor
    \label{eq:k_P}
    \end{equation}
\end{small}
\label{th:equilibrium}
\end{theorem}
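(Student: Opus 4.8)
The plan is to run the standard reputation-effect argument (\`a la \citeA{KMRW:1982}, \citeA{Fudenberg/Levine:1989}, \citeA{Schmidt:1993}): track the provider's posterior that the client is the commitment type $\theta^*$, show that each round in which he ``tests'' the client (delivers low quality and then sees negative feedback) forces a multiplicative jump of that posterior by at least $1/r$ for a constant $r<1$, and conclude that more than $k_P$ such rounds would push the posterior above $1$. Fix any equilibrium of $G^{\infty}(\mu)$ and any public history along which the client reports honestly (plays $in$, reports $0$ after $q_0$, reports $1$ after $q_1$ --- i.e. behaves as $\theta^*$ would). Let $\mu_t$ be the provider's posterior on $\theta^*$ at the start of round $t$, so $\mu_0=\mu_0^*$. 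Since $\theta^*$ plays $s_C^*$ deterministically, every realized observation on this history is weakly more likely under $\theta^*$ than under $\theta_0$, so $(\mu_t)$ is non-decreasing; and in a round $t$ in which the provider delivers low quality, Bayes' rule gives $\mu_{t+1}=\mu_t/x_t$ with $x_t=\mu_t+(1-\mu_t)\beta_t$, where $\beta_t\in[0,1]$ is the equilibrium probability that $\theta_0$ reports negatively in that round and $x_t$ is the provider's posterior probability of seeing a negative report. (If a positive report ever follows low quality, the client is revealed to be $\theta_0$ and play continues as in the complete-information analysis of Section~\ref{eqAnalysis}; so it suffices to bound the number of tests on an all-honest history.)

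The crux is to prove $x_t\le r$ at every test round, where $r = \frac{\delta(\overline{V_P} - \alpha p + c) + (1-\delta)p}{\delta(\overline{V_P} - \alpha p + c) + (1-\delta)\bar{\varepsilon}}$. This is a provider incentive constraint. On one hand, delivering low quality (the best case being $e_0 d$; an effortful way of producing $q_0$ is worse for the provider, cf. Table~\ref{tab:normalForm}, so the bound is unaffected) gives him at most $p$ this round, minus $\bar{\varepsilon}$ with probability $x_t$; the continuation after a passed test is no larger than his value $V_P^t$ at the start of that round, intuitively because a higher posterior on the ``tough'' commitment type can only shrink what he can extract; and the continuation after the client is revealed is at most $\overline{V_P}$. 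On the other hand, the provider can always abandon testing and play $e_1 l^{q_0} d^{q_1}$ forever: against a client who behaves as $\theta^*$ this secures him exactly $\alpha p - c$, so $V_P^t\ge\alpha p-c$. Combining, $\alpha p - c \le V_P^t \le (1-\delta)\bigl(p - x_t\bar{\varepsilon}\bigr) + \delta\bigl(x_t V_P^t + (1-x_t)\overline{V_P}\bigr)$, and solving for $x_t$ (using $\overline{V_P}\ge\alpha p-c$) yields $x_t\le r$. It is the bound ``post-test continuation $\le V_P^t$'' that produces the $\delta(\overline{V_P}-\alpha p+c)$ term in the denominator of $r$; the cruder bound $\overline{V_P}$ there would only give a denominator $(1-\delta)\bar{\varepsilon}$.

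Finally, iterating $\mu_{t+1}=\mu_t/x_t\ge\mu_t/r$ over the successive test rounds $t_1<t_2<\cdots$ and using $\mu_{t_1}\ge\mu_0^*$ gives $\mu_{t_n+1}\ge\mu_0^*/r^{\,n}$. For the bound to bind we need $\bar{\varepsilon}>p$, so that $0<r<1$; then $\mu_{t_n+1}\le 1$ forces $r^{\,n}\ge\mu_0^*$, i.e. $n\le\ln\mu_0^*/\ln r$, so the number of tests --- hence the number of low-quality deliveries on any all-honest history --- is at most $k_P=\lfloor\ln\mu_0^*/\ln r\rfloor$. (When $\bar{\varepsilon}\le p$ the reputation mechanism is too weak, $r\ge1$, and the statement is vacuous.)

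The main obstacle is the middle step, and within it two points deserve care: (i) the claim that the provider can guarantee himself $\alpha p - c$ --- this needs a short argument, handling off-path continuation play after his deviation, that a best-responding normal-type client cannot profitably hold him below $\alpha p - c$ when he commits to $e_1 l^{q_0} d^{q_1}$ (she never does better than always playing $in$ and reporting $1$ after $q_1$, since $\alpha(u-p)>\underline{V_C}$); and (ii) the monotonicity of the provider's attainable continuation value in his posterior on $\theta^*$ (together with the fact that, conditional on $\theta_0$, his continuation cannot exceed $\overline{V_P}$ by the characterization of feasible payoffs). Everything else is bookkeeping with Table~\ref{tab:normalForm} and Bayes' rule.
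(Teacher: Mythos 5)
Your blueprint coincides with the paper's: a Fudenberg--Levine-style count of sanctioned ``tests'' via Bayes' rule, combined with a provider incentive inequality that caps his belief in a sanction at the threshold $r=\overline{\pi}$, yielding $k_P=\lfloor\ln\mu_0^*/\ln\overline{\pi}\rfloor$. The divergence is \emph{where} the incentive inequality is applied, and that is where your argument has a genuine gap. The paper first shows, with a cruder threshold ($\pi>\delta\overline{V_P}/(\delta\overline{V_P}+(1-\delta)\bar{\varepsilon})$), that low quality is delivered only finitely often in any equilibrium, and then runs the sharp incentive comparison only at the \emph{last} such round: there, after a negative report the provider never tests again, so his continuation against a client behaving as $\theta^*$ is at most $\alpha p-c$, and comparing ``deliver'' against ``roll back'' (worth at least $\delta(\alpha p-c)$) gives $\pi\le\overline{\pi}$ directly. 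You instead need $x_t\le r$ at \emph{every} test round, and you obtain it by bounding the post-sanction continuation by $V_P^t$, justified only by the intuition that a higher posterior on $\theta^*$ ``can only shrink what he can extract.'' That step is unproven and not innocuous. Within a fixed equilibrium nothing forces the continuation after a negative report to lie below the current value $V_P^t$: the current value also averages in the flow term $p-x_t\bar{\varepsilon}$ (possibly very negative) and a possibly punishing continuation after the client reveals rationality, so $V_P^t$ can sit below $\alpha p-c$ while the post-sanction continuation equals $\alpha p-c$. Read instead as a cross-belief statement, monotonicity of attainable equilibrium continuation payoffs in the posterior is precisely the kind of claim that requires a real argument in reputation games and cannot be waved through; you yourself flag it as point (ii) but supply nothing. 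Without it, the chain $\mu_{t+1}\ge\mu_t/r$ at every test collapses, and with only the crude continuation bound $\overline{V_P}$ you concede the resulting threshold is too weak.

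Two smaller remarks. Your lower bound $V_P^t\ge\alpha p-c$ via ``commit to $e_1 l^{q_0}d^{q_1}$'' carries the same off-path subtlety as the paper's own assertion $V_P'\ge\delta(\alpha p-c)$ (the normal type's reaction to a deviation is fixed by her equilibrium strategy, not by a best response to his commitment, so the deviation cleanly guarantees only the $\mu_t$-weighted payoff against the commitment type); you are no worse off than the paper there, but your parenthetical best-response justification is not the right argument. Also, your algebra as set up actually produces a slightly tighter threshold (numerator $\delta(\overline{V_P}-\alpha p+c)+(1-\delta)(p-\alpha p+c)$ rather than $(1-\delta)p$), which still implies $x_t\le r$, so that is not an error; and your observation that $\bar{\varepsilon}>p$ is needed for $r<1$ is correct and implicit in the paper. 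The missing idea that repairs your crux is the paper's two-step structure: prove finiteness with the crude threshold, then localize the sharp comparison to the last low-quality round, where the post-sanction continuation is bounded by $\alpha p-c$ without any monotonicity assumption.
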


\begin{proof}
First, we use an important result obtained by \citeA{Fudenberg/Levine:1989}
about statistical inference (Lemma 1): If every previously delivered low
quality service was sanctioned by a negative report, the provider must expect
with increasing probability that his next low quality delivery will also be
sanctioned by negative feedback. Technically, for any $\pi < 1$, the provider
can deliver at most $n(\pi)$ low quality services (sanctioned by negative
feedback) before expecting that the $n(\pi)+1$ low quality delivery will also
be sanctioned by negative feedback with probability greater then $\pi$. This
number equals to:
\begin{small}
  \begin{equation*}
    n(\pi) = \left \lfloor \frac{\ln \mu^*}{\ln \pi} \right \rfloor;
  \end{equation*}
\end{small}

As stated earlier, this lemma does not prove that the provider will become
convinced that he is facing a commitment type client. It simply proves that
after a finite number of rounds the provider becomes convinced that the client
is playing as if she were a commitment type.

Second, if $\pi > \frac{\delta \overline{V_P}}{\delta \overline{V_P} +
(1-\delta)\bar{\varepsilon}}$ but is strictly smaller than $1$, the rational
provider does not deliver low quality (it is easy to verify that the maximum
discounted future gain does not compensate for the risk of getting a negative
feedback in the present round). By the previously mentioned lemma, it must be
that in any equilibrium, the provider delivers low quality a finite number of
times.

Third, let us analyze the round, $\bar{t}$, when the provider is about to
deliver a low quality service (play $d^{q_0}$) for the last time. If $\pi$ is
the belief of the provider that the client reports honestly in round $\bar{t}$,
his expected payoff (just before deciding to deliver the low quality service)
can be computed as follows:
\begin{itemize}
  \item with probability $\pi$ the client reports 0. Her reputation
  for reporting honestly becomes credible, so the provider plays $e_1 l^{q_0} d^{q_1}$
  in all subsequent rounds. The provider gains $p-\bar{\varepsilon}$ in the
  current round, and expects $\alpha p - c$ for the subsequent rounds;

  \item with probability $1-\pi$, the client reports $1$ and
  deviates from the commitment strategy, the provider knows he is facing a
  rational client, and can choose a continuation PPE strategy from the complete
  information game. He gains $p$ in the current round, and expects at most
  $\overline{V_P}$ in the subsequent rounds;

\end{itemize}

\begin{small}
  \begin{equation*}
    V_P \leq (1-\delta) (p - \pi \bar{\varepsilon}) +
        \delta (\pi (\alpha p -c) + (1-\pi) \overline{V_P})
  \end{equation*}
\end{small}

On the other hand, had the provider acknowledged the low quality and rolled
back the transaction (i.e., play $l^{q_0}$), his expected payoff would have
been at least:
\begin{small}
  \begin{equation*}
    V_P' \geq (1-\delta) 0 + \delta (\alpha p -c)
  \end{equation*}
\end{small}

Since the provider chooses nonetheless to play $d^{q_0}$ it must be that $V_P
\geq V_P'$ which is equivalent to:
\begin{small}
  \begin{equation}
    \pi \leq \overline{\pi} = \frac{\delta (\overline{V_P} - \alpha p + c) + (1-\delta)p}
            {\delta (\overline{V_P} - \alpha p + c) + (1-\delta)\bar{\varepsilon}}
  \label{eq:overlinePi}
  \end{equation}
\end{small}

Finally, by replacing Equation (\ref{eq:overlinePi}) in the definition of
$n(\pi)$ we obtain the upper bound on the number of times the provider delivers
low quality service to a client committed to report honestly.
\end{proof}

The existence of $k_P$ further reduces the possible equilibrium payoffs a
client can get in $G^\infty(\mu)$. Consider a rational client who receives for
the first time low quality. She has the following options:
\begin{itemize}
  \item report negative feedback and attempt to build a reputation for reporting
  honestly. Her payoff for the current round is $-p-\varepsilon$. Moreover, her worst
  case expectation for the future is that the next $k_P -1$ rounds will also give her $-p-\varepsilon$,
  followed by the commitment payoff equal to $\alpha(u-p)$:
   \begin{small}
     \begin{equation}
       V_C|0 = (1-\delta) (-p -\varepsilon) + \delta(1 -
       \delta^{k_P-1})(-p-\varepsilon) + \delta^{k_P} \alpha (u-p);
     \end{equation}
   \end{small}

  \item on the other hand, by reporting positive feedback she reveals to be a
  normal type, loses only $p$ in the current round, and expects a continuation
  payoff equal to $\hat{V}_C$ given by a PPE strategy profile of the complete
  information game $G^\infty$:
   \begin{small}
     \begin{equation}
       V_C|1 = (1-\delta) (-p) + \delta \hat{V}_C;
     \end{equation}
   \end{small}

\end{itemize}

The reputation mechanism records false reports only when clients do not have
the incentive to build a reputation for reporting honestly, and $V_C|1 >
V_C|0$; this is true for:
\begin{small}
  \begin{equation*}
        \hat{V}_C > \delta^{k_P -1} \alpha (u-p) -
        (1-\delta^{k_P-1})(p+\varepsilon) - \frac{1-\delta}{\delta}
        \varepsilon;
  \end{equation*}
\end{small}

Following the argument of Proposition \ref{prop:boundNoRep} we can obtain a
bound on the percentage of false reports recorded by the reputation mechanism
in a pareto-optimal PPE that gives the client at least $\hat{V}_C$:

\begin{small}
\begin{equation}
    \hat{\gamma} = \left\{ \begin{array}{ll}
        \frac{\alpha(u-p) -\hat{V}_C}{p} & \mbox{if $\hat{V}_C \geq \alpha u -p$}; \\
        \frac{u-p - \hat{V}_C}{u} & \mbox{if $\hat{V}_C < \alpha u -p$}
      \end{array} \right.
    \label{eq:boundWithRep}
\end{equation}
\end{small}

Of particular importance is the case when $k_P = 1$. $\hat{V}_C$ and
$\hat{\gamma}$ become:
\begin{small}
\begin{equation}
    \hat{V}_C = \alpha(u-p) - \frac{1-\delta}{\delta} \varepsilon; \quad
        \hat{\gamma} = \frac{(1-\delta)\varepsilon}{\delta p};
\end{equation}
\end{small}
so the probability of recording a false report (after the first one) can be
arbitrarily close to 0 as $\varepsilon \rightarrow 0$.

For the pizza delivery example introduced in Section \ref{setting_example},
Figure \ref{fig:k_P} plots the bound, $k_P$, defined in Theorem
\ref{th:equilibrium}, as a function of the prior belief ($\mu^*_0$) of the
provider that the client is an honest reporter. We have used a value of the
discount factor equal to $\delta = 0.95$, such that on average, every client
interacts $1/(1- \delta) = 20$ times with the same provider. The penalty for
negative feedback was taken $\bar{\varepsilon} = 2.5$. When the provider
believes that $20\%$ of the clients always report honestly, he will deliver at
most 3 times low quality. When the belief goes up to $\mu^*_0 = 40\%$ no
rational provider will deliver low quality more than once.

\begin{figure}[t]
    \centerline{\includegraphics[width=0.6\columnwidth]{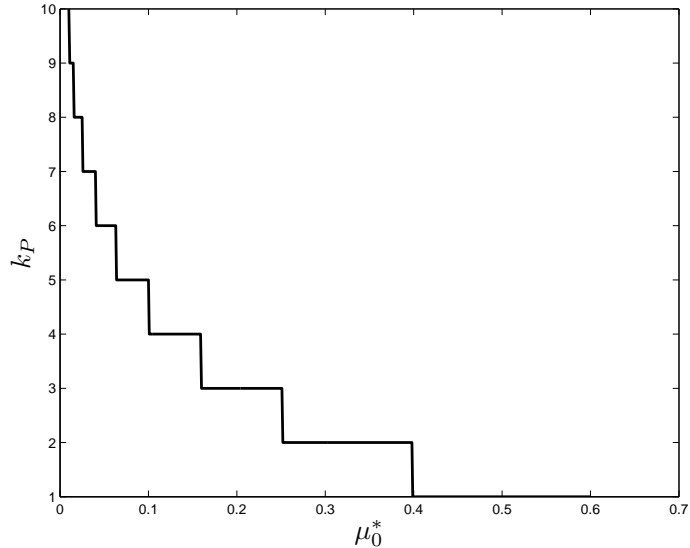}}
    \caption{The upper bound $k_P$ as a function of the prior belief $\mu_0^*$.}
    \label{fig:k_P}
\end{figure}

In Figure \ref{fig:gamma} we plot the values of the bounds $\gamma$ (Equation
(\ref{eq:boundNoRep})) and $\hat{\gamma}$ (Equation (\ref{eq:boundWithRep})) as
a function of the prior belief $\mu_0^*$. The bounds simultaneously hold,
therefore the maximum percentage of false reports recorded by the reputation
mechanism is the minimum of the two. When $\mu_0^*$ is less $0.25$, $k_P \geq
2$, $\gamma \leq \hat{\gamma}$, and the reputation effect does not
significantly reduce the worst case percentage of false reports recorded by the
mechanism. However, when $\mu_0^* \in (0.25, 0.4)$ the reputation mechanism
records (in the worst case) only half as many false reports, and as $\mu_0^* >
0.4$, the percentage of false reports drops to $0.005$. This probability can be
further decreased by decreasing the penalty $\varepsilon$. In the limit, as
$\varepsilon$ approaches 0, the reputation mechanism will register a false
report with vanishing probability.

The result of Theorem \ref{th:equilibrium} has to be interpreted as a worst
case scenario. In real markets, providers that already have a small
predisposition to cooperate will defect fewer times. Moreover, the mechanism is
self enforcing, in the sense that the more clients act as commitment types, the
higher will be the prior beliefs of the providers that new, unknown clients
will report truthfully, and therefore the easier it will be for the new clients
to act as truthful reporters.

As mentioned at the end of Section \ref{eqAnalysis}, the bound $\hat{\gamma}$
strongly depends on the punishment $\bar{\varepsilon}$ imposed by the
reputation mechanism for a negative feedback. The higher $\bar{\varepsilon}$,
the easier it is for clients to build a reputation, and therefore, the lower
the amount of false information recorded by the reputation mechanism.

\begin{figure}[t]
    \centerline{\includegraphics[width=0.6\columnwidth]{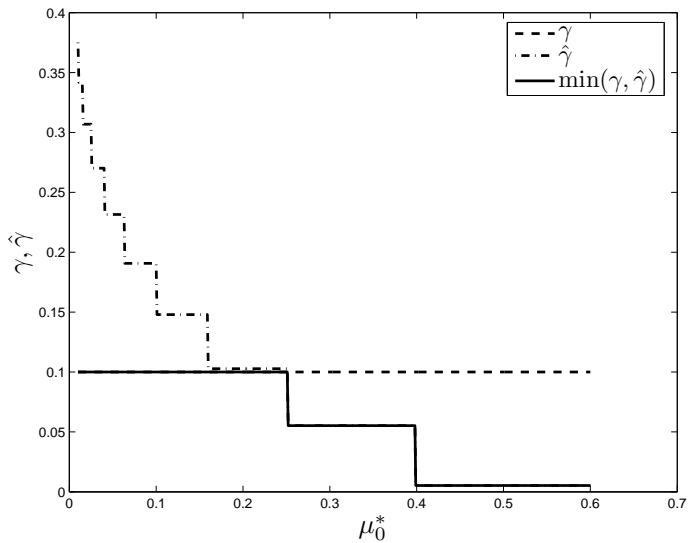}}
    \caption{The maximum probability of recording a false report as a function
    of the prior belief $\mu_0^*$.}
    \label{fig:gamma}
\end{figure}

\section{The Threat of Malicious Clients}
\label{evilBuyers}

The mechanism described so far encourages service providers to do their best
and deliver good service. The clients were assumed rational, or committed to
report honestly, and in either case, they never report negative feedback
unfairly. In this section, we investigate what happens when clients explicitly
try to ``hurt'' the providers by submitting fake negative ratings to the
reputation mechanism.

An immediate consequence of fake negative reports is that clients lose money.
However, the costs $\varepsilon$ of a negative report would probably be too
small to deter clients with separate agendas from hurting the provider.
Fortunately, the mechanism we propose naturally protects service providers from
consistent attacks initiated by malicious clients.

Formally, a \emph{malicious type} client, $\theta_\beta \in \Theta$, obtains a
supplementary (external) payoff $\beta$ for reporting negative feedback.
Obviously, $\beta$ has to be greater than the penalty $\varepsilon$, otherwise
the results of Proposition \ref{prop:noZero} would apply. In the incomplete
information game $G^\infty(\mu)$, the provider now assigns non-zero initial
probability to the belief that the client is malicious.

When only the normal type, $\theta_0$, the honest reporter type $\theta^*$ and
the malicious type $\theta_\beta$ have non-zero initial probability, the
mechanism we describe is robust against unfair negative reports. The first
false negative report exposes the client as being malicious, since neither the
normal, nor the commitment type report $0$ after receiving high quality. By
Bayes' Law, the provider's updated belief following a false negative report
must assign probability 1 to the malicious type. Although providers are not
allowed to refuse service requests, they can protect themselves against
malicious clients by playing $e_0 l$: i.e., exert low effort and reimburse the
client afterwards. The RM records neutral feedback in this case, and does not
sanction the provider. Against $e_0 l$, malicious clients are better off by
quitting the market (opt $out$), thus stopping the attack. The RM records at
most one false negative report for every malicious client, and assuming that
identity changes are difficult, providers are not vulnerable to unfair
punishments.

When other types (besides $\theta_0, \theta^*$ and $\theta_\beta$) have
non-zero initial probability, malicious clients are harder to detect. They
could masquerade client types that are normal, but accidentally misreport. It
is not rational for the provider to immediately exclude (by playing $e_0 l$)
normal clients that rarely misreport: the majority of the cooperative
transactions rewarded by positive feedback still generate positive payoffs. Let
us now consider the client type $\theta_0(\nu) \in \Theta$ that behaves exactly
like the normal type, but misreports $0$ instead of $1$ independently with
probability $\nu$. When interacting with the client type $\theta_0(\nu)$, the
provider receives the maximum number of unfair negative reports when playing
the efficient equilibrium: i.e., $e_1 l^{q_0} d^{q_1}$. In this case, the
provider's expected payoff is:
\begin{small}
\begin{equation*}
  V_P = \alpha p - c - \nu \bar{\varepsilon};
\end{equation*}
\end{small}
Since $V_P$ has to be positive (the minimax payoff of the provider is 0, given
by $e_0 l$), it must be that $\nu \leq \frac{\alpha p - c}{\bar{\varepsilon}}$.

The maximum value of $\nu$ is also a good approximation for the maximum
percentage of false negative reports the malicious type can submit to the
reputation mechanism. Any significantly higher number of harmful reports
exposes the malicious type and allows the provider to defend himself.

Note, however, that the malicious type can submit a fraction $\nu$ of false
reports only when the type $\theta_0(\nu)$ has positive prior probability. When
the provider does not believe that a normal client can make so many mistakes
(even if the percentage of false reports is still low enough to generate
positive revenues) he attributes the false reports to a malicious type, and
disengages from cooperative behavior. Therefore, one method to reduce the
impact of malicious clients is to make sure that normal clients make few or no
mistakes. Technical means (for example by providing automated tools for
formatting and submitting feedback), or improved user interfaces (that make it
easier for human users to spot reporting mistakes) will greatly limit the
percentage of mistakes made by normal clients, and therefore, also reduce the
amount of harm done by malicious clients.

One concrete method for reducing mistakes is to solicit only negative feedback
from the clients (the principle that no news is good news, also applied by
\citeA{Dellarocas:2005}). As reporting involves some conscious decision,
mistakes will be less frequent.  On the other hand, the reporting effort will
add to the penalty for a negative report, and makes it harder for normal
clients to establish a reputation for honest reporters. Alternative methods for
reducing the harm done by malicious clients (like filtering mechanisms, etc., )
as well as tighter bounds on the percentage of false reports introduced by such
clients will be further addressed in future work.


\section{Discussion and Future Work}
\label{future_work}

Further benefits can be obtained if the clients' reputation for reporting
honestly is shared within the market. The reports submitted by a client while
interacting with other providers will change the initial beliefs of a new
provider. As we have seen in Section \ref{buidingReputation}, providers cheat
less if they a priory expect with higher probability to encounter honest
reporting clients. A client that has once built a reputation for truthfully
reporting the provider's behavior will benefit from cooperative trade during
her entire lifetime, without having to convince each provider separately.
Therefore the upper bound on the loss a client has to withstand in order to
convince a provider that she is a commitment type, becomes an upper bound on
the total loss a client has to withstand during her entire lifetime in the
market. How to effectively share the reputation of clients within the market
remains an open issue.

Correlated with this idea is the observation that clients that use our
mechanism are motivated to keep their identity. In generalized markets where
agents are encouraged to play both roles (e.g. a peer-2-peer file sharing
market where the fact that an agent acts only as ``provider'' can be
interpreted as a strong indication of ``double identity'' with the intention of
cheating) our mechanism also solves the problem signaled by
\citeA{Friedman/Resnick:2001} related to cheap online pseudonyms. The price to
pay for the new identity is the loss due to building a reputation as truthful
reporter when acting as a client.

Unlike incentive-compatible mechanism that pay reporters depending on the
feedback provided by peers, the mechanism described here is less vulnerable to
collusion. The only reason individual clients would collude is to badmouth
(i.e., artificially decrease the reputation of) a provider. However, as long as
the punishment for negative feedback is not super-linear in the number of
reports (this is usually the case), coordinating within a coalition brings no
benefits for the colluders: individual actions are just as effective as the
actions when part of a coalition. The collusion between the provider and client
can only accelerate the synchronization of strategies on one of the PPE
profiles (collusion on a non-PPE strategy profile is not stable), which is
rather desirable. The only profitable collusion can happen when competitor
providers incentivize normal clients to unfairly downrate their current
provider. Colluding clients become \emph{malicious} in this case, and the
limits on the harm they can do are presented in Section \ref{evilBuyers}.

The mechanism we describe here is not a general solution for all online
markets. In general retail e-commerce, clients don't usually interact with the
same service provider more than once. As we have showed along this paper, the
assumption of a repeated interaction is crucial for our results. Nevertheless,
we believe there are several scenarios of practical importance that do meet our
requirements (e.g., interactions that are part of a supply chain). For these,
our mechanism can be used in conjunction with other reputation mechanisms to
guarantee reliable feedback and improve the overall efficiency of the market.

Our mechanism can be further criticized for being centralized. The reputation
mechanism acts as a central authority by supervising monetary transactions,
collecting feedback and imposing penalties on the participants. However, we see
no problem in implementing the reputation mechanism as a distributed system.
Different providers can use different reputation mechanisms, or, can even
switch mechanisms given that some safeguarding measures are in place. Concrete
implementations remain to be addressed by future work.

Although we present a setting where the service always costs the same amount,
our results can be easily extended to scenarios where the provider may deliver
different kinds of services, having different prices. As long as the provider
believes that requests are randomly drawn from some distribution, the bounds
presented above can be computed using the average values of $u$, $p$ and $c$.
The constraint on the provider's belief is necessary in order to exclude some
unlikely situations where the provider cheats on a one time high value
transaction, knowing that the following interactions carry little revenue, and
therefore, cannot impose effective punishments.

In this paper, we systematically overestimate the bounds on the worst case
percentage of false reports recorded by the mechanism. The computation of tight
bounds requires a precise quantitative description of the actual set of PPE
payoffs the client and the provider can have in $G^\infty$.
\citeA{Fudenberg/Levine/Maskin:1994} and \citeA{Abreu/Pearce/Stacchetti:1990}
pose the theoretical grounds for computing the set of PPE payoffs in an
infinitely repeated game with discount factors strictly smaller than 1.
However, efficient algorithms that allow us to find this set are still an open
question. As research in this domain progresses, we expect to be able to
significantly lower the upper bounds described in Sections \ref{GTanalysis} and
\ref{buidingReputation}.

One direction of future research is to study the behavior of the above
mechanism when there is two-sided incomplete information: i.e. the client is
also uncertain about the type of the provider. A provider type of particular
importance is the ``greedy'' type who always likes to keep the client to a
continuation payoff arbitrarily close to the minimal one. In this situation we
expect to be able to find an upper bound $k_C$ on the number of rounds in which
a rational client would be willing to test the true type of the provider. The
condition $k_P < k_C$ describes the constraints on the parameters of the system
for which the reputation effect will work in the favor of the client: i.e. the
provider will give up first the ``psychological'' war and revert to a
cooperative equilibrium.

The problem of involuntary reporting mistakes briefly mentioned in Section
\ref{evilBuyers} needs further addressing. Besides false negative mistakes
(reporting $0$ instead of $1$), normal clients can also make false positive
mistakes (report $1$ instead of the intended $0$). In our present framework,
one such mistake is enough ro ruin the reputation of a normal type client to
report honestly. This is one of the reasons why we chose a sequential model
where the feedback of the client is not required if the provider acknowledges
low quality. Once the reputation of the client becomes credible, the provider
always rolls back the transactions that generate (accidentally or not) low
quality, so the client is not required to continuously defend her reputation.
Nevertheless, the consequences of reporting mistakes in the reputation building
phase must be considered in more detail. Similarly, mistakes made by the
provider, monitoring and communication errors will also influence the results
presented here.


Last, but not the least, practical implementations of the mechanism we propose
must address the problem of persistent online identities. One possible attack
created by easy identity changes has been mentioned in Section
\ref{evilBuyers}: malicious buyers can continuously change identity in order to
discredit the provider. In another attack, the provider can use fake identities
to increase his revenue. When punishments for negative feedback are generated
endogenously by decreased prices in a fixed number of future transactions
\cite<e.g.,>{Dellarocas:2005}, the provider can adopt the following strategy:
he cheats on all real customers, but generates a sufficient number of fake
transactions in between two real transactions, such that the effect created by
the real negative report disappears. An easy fix to this latter attack is to
charge transaction or entrance fees. However, these measures also affect the
overall efficiency of the market, and therefore, different applications will
most likely need individual solutions.


\section{Conclusions}
\label{conclusions}

Effective reputation mechanisms must provide appropriate incentives in order to
obtain honest feedback from self-interested clients. For environments
characterized by adverse selection, direct payments can explicitly reward
honest information by conditioning the amount to be paid on the information
reported by other peers. The same technique unfortunately does not work when
service providers have moral hazard, and can individually decide which requests
to satisfy. Sanctioning reputation mechanisms must therefore use other
mechanisms to obtain reliable feedback.

In this paper we describe an incentive-compatible reputation mechanism when the
clients also have a repeated presence in the market. Before asking feedback
from the clients, we allow the provider to acknowledge failures and reimburse
the price paid for service. When future transactions generate sufficient
profit, we prove that there is an equilibrium where the provider behaves as
socially desired: he always exerts effort, and reimburses clients that
occasionally receive bad service due to uncontrollable factors. Moreover, we
analyze the set of pareto-optimal equilibria of the mechanism, and establish a
limit on the maximum amount of false information recorded by the mechanism. The
bound depends both on the external alternatives available to clients and on the
ease with which they can commit to reporting the truth.

\appendix

\section{Proof of Proposition \ref{prop:noZero}}
\label{ap:noZero}

\textit{
    The probability that the client reports negative feedback on the
    equilibrium path of any pareto-optimal PPE strategy is zero.
}

\begin{proof}

\emph{Step 1}. Following the principle of dynamic programming
\cite{Abreu/Pearce/Stacchetti:1990}, the payoff profile $V = (V_C,V_P)$ is a
PPE of $G^\infty$, if and only if there is a strategy profile $\sigma$ in $G$,
and the continuation PPE payoffs profiles $\{W(y) | y \in Y\}$ of $G^\infty$,
such that:
\begin{itemize}
    \item $V$ is obtained by playing $\sigma$ in the current round, and a PPE
    strategy that gives $W(y)$ as a continuation payoff, where $y$ is the public outcome of the
    current round, and $Pr[y|\sigma]$ is the probability of observing $y$ after playing $\sigma$:
    \begin{small}
     \begin{equation*}
      \begin{split}
          V_C &= (1-\delta)g_C(\sigma) + \delta \Big( \sum_{y \in Y}
          Pr[y|\sigma] \cdot W_C(y) \Big); \\
          V_P &= (1-\delta)g_P(\sigma) + \delta \Big( \sum_{y \in Y}
          Pr[y|\sigma] \cdot W_P(y) \Big);
      \end{split}
     \end{equation*}
    \end{small}

    \item no player finds it profitable to deviate from $\sigma$:
    \begin{small}
      \begin{equation*}
        \begin{split}
         V_C & \geq (1-\delta)g_C \big( (\sigma_C',\sigma_P) \big) + \delta \Big( \sum_{y \in Y}
            Pr \big[y|(\sigma_C',\sigma_P)\big] \cdot W_C(y) \Big); \quad \forall \sigma_C' \neq \sigma_C \\
         V_P & \geq (1-\delta)g_P \big( (\sigma_C,\sigma_P') \big) + \delta \Big( \sum_{y \in Y}
            Pr \big[y|(\sigma_C,\sigma_P')\big] \cdot W_P(y) \Big); \quad \forall \sigma_P' \neq \sigma_P\\
        \end{split}
      \end{equation*}
    \end{small}
\end{itemize}

The strategy $\sigma$ and the payoff profiles $\{W(y) | y \in Y\}$ are said to
\emph{enforce} $V$.

\emph{Step 2.} Take the PPE payoff profile ${V} = (V_C,V_P)$, such that there
is no other PPE payoff profile $V' = (V_C', V_P)$ with $V_C < V_C'$. Let
$\sigma$ and $\{W(y) | y \in Y\}$ enforce $V$, and assume that $\sigma$ assigns
positive probability $\beta_0 = Pr[q_0 0|\sigma] > 0$ to the outcome $q_0 0$.
If $\beta_1 = Pr[q_0 1|\sigma]$ (possibly equal to 0), let us consider:
\begin{itemize}
  \item the strategy profile $\sigma' = (\sigma_C',\sigma_P)$ where $\sigma'_C$
  is obtained from $\sigma_C$ by asking the client to report $1$ instead of $0$
  when she receives low quality (i.e., $q_0$);

  \item the continuation payoffs $\{W'(y) | y \in Y\}$ such that
  $W'_i(q_0 1) = \beta_0 W_i(q_0 0) + \beta_1  W_i(q_0 1)$ and $W'_i(y \neq q_0 1)
  = W_i(y)$ for $i \in \{C,P\}$. Since, the set of correlated PPE payoff profiles
  of $G^\infty$ is convex, if $W(y)$ are PPE payoff profiles, so are $W'(y)$.
\end{itemize}

The payoff profile $(V'_C, V_P)$, $V'_C = V_C + (1-\delta)\beta_0 \varepsilon$
is a PPE equilibrium profile because it can be enforced by $\sigma'$ and
$\{W'(y) | y \in Y\}$. However, this contradicts our assumption that $V'_C <
V_C$, so $Pr[q_0 0|\sigma]$ must be 0. Following exactly the same argument, we
can prove that $Pr[q_1 0|\sigma] = 0$.

\emph{Step 3.} Taking $V$, $\sigma$ and $\{W(y) | y \in Y\}$ from step 2, we
have:
\begin{small}
 \begin{equation}
      V_C = (1-\delta)g_C(\sigma) + \delta \Big( \sum_{y \in Y}
      Pr[y|\sigma] \cdot W_C(y) \Big);
      \label{eq:step3}
 \end{equation}
\end{small}
If no other PPE payoff profile $V' = (V_C', V_P)$ can have $V'_C > V_C$, it
must be that the continuation payoffs $W(y)$ satisfy the same property. (Assume
otherwise that there is a PPE $(W_C'(y),W_P(y))$ with $W_C'(y) > W_C(y)$.
Replacing $W'_C(y)$ in (\ref{eq:step3}) we obtain $V'$ that contradicts the
hypothesis).

By continuing the recursion, we obtain that the client never reports $0$ on the
equilibrium path that enforces a payoff profile as defined in Step 2.
Pareto-optimal payoff profiles clearly enter this category, hence the result of
the proposition.
\end{proof}


\section{Proof of Proposition \ref{prop:boundNoRep}}
\label{ap:boundNoRep}

\textit{
  The upper bound on the percentage of false reports recorded by the reputation
  mechanism in any PPE equilibrium is:
  \begin{small}
    \begin{equation*}
      \gamma \leq \left\{ \begin{array}{ll}
        \frac{(1-\alpha)(p-u) + p\rho}{p} & \mbox{if $p\rho \leq u(1-\alpha)$}; \\
        \frac{p \rho}{u} & \mbox{if $p\rho > u(1-\alpha)$}
      \end{array} \right.
    \end{equation*}
  \end{small}
}
\begin{proof}
  Since clients never report negative feedback along pareto-optimal equilibria,
  the only false reports recorded by the reputation mechanism appear when the
  provider delivers low quality, and the client reports positive feedback. Let
  $\sigma = (\sigma_C, \sigma_P)$ be a pareto-optimal PPE strategy profile.
  $\sigma$ induces a probability distribution over public histories and,
  therefore, over expected outcomes in each of the following transactions. Let
  $\mu_t$ be the probability distribution induced by $\sigma$ over the outcomes in round
  $t$. $\mu_t(q_0 0) = \mu_t(q_1 0) = 0$ as proven by Proposition
  \ref{prop:noZero}.
  The payoff received by the client when playing $\sigma$ is therefore:
  \begin{small}
    \begin{equation*}
      V_C(\sigma) \leq (1-\delta) \sum_{t=0} ^\infty \delta^t \Big( \mu_t(q_0 1)
      (-p) + \mu_t(q_1 1) (u-p) + \mu_t(l) 0 + \mu_t(out) (u-p-p\rho) \Big);
    \end{equation*}
  \end{small}
  where $\mu_t(q_0 1) + \mu_t(q_1 1) + \mu_t(l) + \mu_t(out) = 1$ and
  $\mu_t(q_0 1) + \mu_t(l) \geq (1-\alpha) \mu_t(q_1 1) /
  \alpha$, because the probability of $q_0$ is at least $(1-\alpha)/\alpha$ times
  the probability of $q_1$.

  When the
  discount factor, $\delta$, is the probability that the repeated interaction
  will stop after each transaction, the expected probability of the outcome
  $q_0 1$ is:
  \begin{small}
    \begin{equation*}
      \gamma = (1-\delta) \sum_{t=0}^\infty \delta^t \mu_t(q_0 1);
    \end{equation*}
  \end{small}

  Since any PPE profile must give the client at least $\underline{V_C} = u - p(1+\rho)$,
  (otherwise the client is better off by resorting to the outside option),
  $V_C(\sigma) \geq \underline{V_C}$. By replacing the expression of
  $V_C(\sigma)$, and taking into account the constraints on the probability of $q_1$ we obtain:
  \begin{small}
    \begin{equation*}
      \gamma (-p) + (u-p) \cdot \min\big(1-\gamma, \alpha\big)  \leq \underline{V_C};
    \end{equation*}
  \end{small}
  \begin{small}
    \begin{equation*}
      \gamma \leq \left\{ \begin{array}{ll}
        \frac{(1-\alpha)(p-u) + p\rho}{p} & \mbox{if $p\rho \leq u(1-\alpha)$}; \\
        \frac{p \rho}{u} & \mbox{if $p\rho > u(1-\alpha)$}
      \end{array} \right.
    \end{equation*}
  \end{small}
\end{proof}


\end{document}